\documentclass[10pt]{article}

\usepackage[margin=1in]{geometry}
\usepackage{times}
\usepackage{microtype}
\usepackage{graphicx}
\usepackage{xcolor}
\usepackage{booktabs}
\usepackage{algorithm}
\usepackage{algorithmic}
\usepackage{amsmath,amssymb}
\usepackage{amsthm}
\usepackage{aliascnt}
\usepackage{enumitem}
\usepackage[numbers,sort&compress]{natbib}
\usepackage[colorlinks=true,linkcolor=blue,citecolor=blue,urlcolor=blue]{hyperref}
\usepackage[capitalize,noabbrev]{cleveref}

\title{AC-JEPA: Anchored Cross-Modal Joint-Embedding Predictive Architecture for Visual-to-Symbolic Translation}

\author{
  {\normalsize Chaofei HONG\textsuperscript{*}, Ziyang Chen, Baochen Wang, Christina Yi Jin, Qing Zhang}\\[0.4em]
  {\normalsize Zhejiang Lab}\\[0.2em]
  {\small\texttt{hongchf@zhejianglab.com}}
}
\hypersetup{pdfauthor={Chaofei HONG, Ziyang Chen, Baochen Wang, Christina Yi Jin, Qing Zhang}}

\newcommand{\methodfull}{Anchored Cross-Modal JEPA}
\newcommand{\method}{AC-JEPA}
\newcommand{\ocsr}{Optical Chemical Structure Recognition}

\newaliascnt{theorem}{proposition}
\newtheorem{theorem}[theorem]{Theorem}
\aliascntresetthe{theorem}
\newaliascnt{corollary}{proposition}
\newtheorem{corollary}[corollary]{Corollary}
\aliascntresetthe{corollary}

\begin{document}

\maketitle

\begin{abstract}
Cross-modal translation requires models to capture shared semantics across modalities while preserving the fine-grained structure required by the target modality.
Yet in real-world scientific domains, paired data are often imperfect, while obtaining clean pairs at scale is costly.
Supervised fine-tuning (SFT) directly optimizes cross-modal translation through categorical token-level supervision, but this objective is sensitive to errors in the target sequences.
JEPA offers a complementary representation-space objective, but most existing formulations focus on prediction within a modality rather than translation across modalities.
We introduce \methodfull{} (\method{}), an SFT-compatible plug-in that predicts masked intermediate states in both cross-modal directions using paired data.
The joint-embedding objective targets cross-modal structure predictable from context, while the causal SFT path retains direct token-level supervision for the target sequence.
During self-training, \method{} learns from continuous latent targets derived from imperfect pseudo-labeled pairs, providing a route for using these pairs without promoting pseudo-tokens to hard categorical supervision.
Using optical chemical structure recognition as a testbed, gold-pair \method{} achieves 76.07\% accuracy on WildMol Simple, compared with 67.40\% for Base SFT.
Both models share complete MolParser-7M synthetic pretraining and Real-SFT fine-tuning; \method{} reuses the already-seen synthetic pairs through latent prediction without adding unique gold-pair data.
Using the same pseudo-labeled data for both approaches, \method{}-based self-training further raises accuracy to 80.23\%, whereas accuracy under direct SFT drops from 67.40\% to 63.47\%.
These results support SFT-anchored joint-embedding prediction as an effective strategy for cross-modal translation under both curated and imperfect supervision.

\end{abstract}

\section{Introduction}
\label{sec:introduction}
Scientific images encode knowledge through the identities, positions, and relations of graphical elements rather than through appearance alone.
Translating such images into symbolic or programming languages therefore requires a model to recognize local primitives, recover spatial or topological structure, resolve long-range dependencies, and serialize the result under a strict target grammar.
This differs from coarse semantic recognition or open-ended captioning: a single misplaced symbol can alter the encoded object or invalidate the output.
The problem is further complicated by real scientific data, where visual observations and paired sequences can be imperfect and expensive to curate at scale.
\ocsr{} (OCSR) exemplifies these demands.
Molecular diagrams encode atoms, bonds, stereochemistry, and topology through a spatial notation, whereas SMILES represents the same molecular structure as a compact, grammar-sensitive sequence~\citep{weininger1988smiles,fang2024molparser}.

Autoregressive multimodal generators provide a direct solution to this conversion problem.
A visual encoder extracts image evidence, and a causal language decoder combines it with the previously generated symbols to predict the target sequence.
Supervised fine-tuning (SFT) supplies exact token-level supervision and is therefore effective for learning the target grammar and decoding procedure~\citep{fang2024molparser}.
Its categorical objective, however, specifies the desired output tokens without directly requiring the fused intermediate states to recover predictable structure removed from either modality.
This distinction becomes important when training uses model-generated pairs.
Direct pseudo-label SFT promotes every generated token to a hard target, so each local pseudo-label error directly biases the decoder's learned conditional distribution~\citep{he2020revisitingselftraining,song2020noisylabels}.

The usefulness of paired data also depends on the stage and objective through which they are used.
Large synthetic corpora provide broad molecular coverage for pretraining, whereas real-image fine-tuning addresses the visual distribution of papers and patents~\citep{fang2024molparser}.
After this adaptation, simply resuming token supervision on synthetic pairs need not improve real-scene recognition: in our small-scale implementation, mixing MolParser-7M synthetic pairs with Real-SFT did not improve over Real-SFT-only adaptation and was more prone to overfitting.
This observation motivates extracting further structural constraints from already-seen synthetic pairs while preserving the real-data token anchor.

Representation-learning objectives such as contrastive alignment, masked reconstruction, and latent prediction provide complementary ways to learn correspondences and contextual structure across paired modalities~\citep{radford2021clip,he2022mae,assran2023ijepa}.
JEPA is particularly relevant because it predicts representations, focusing the objective on structure predictable from context rather than every input detail.
Most JEPA formulations, however, optimize encoders or embedding predictors; exact autoregressive translation between modalities is absent or remains separate from latent pretraining.
Applying latent prediction to a symbolic generator consequently requires more than adding a standalone pretraining loss: the predicted states must remain useful to the causal decoder, use evidence from both modalities, and avoid representational collapse.

We address these requirements with \methodfull{} (\method{}), an SFT-compatible latent predictive objective inside the same generator used for inference.
\Cref{fig:macro-flow} summarizes how the SFT anchor and latent objective interact.

\begin{figure}[t]
\centering
\includegraphics[width=\linewidth]{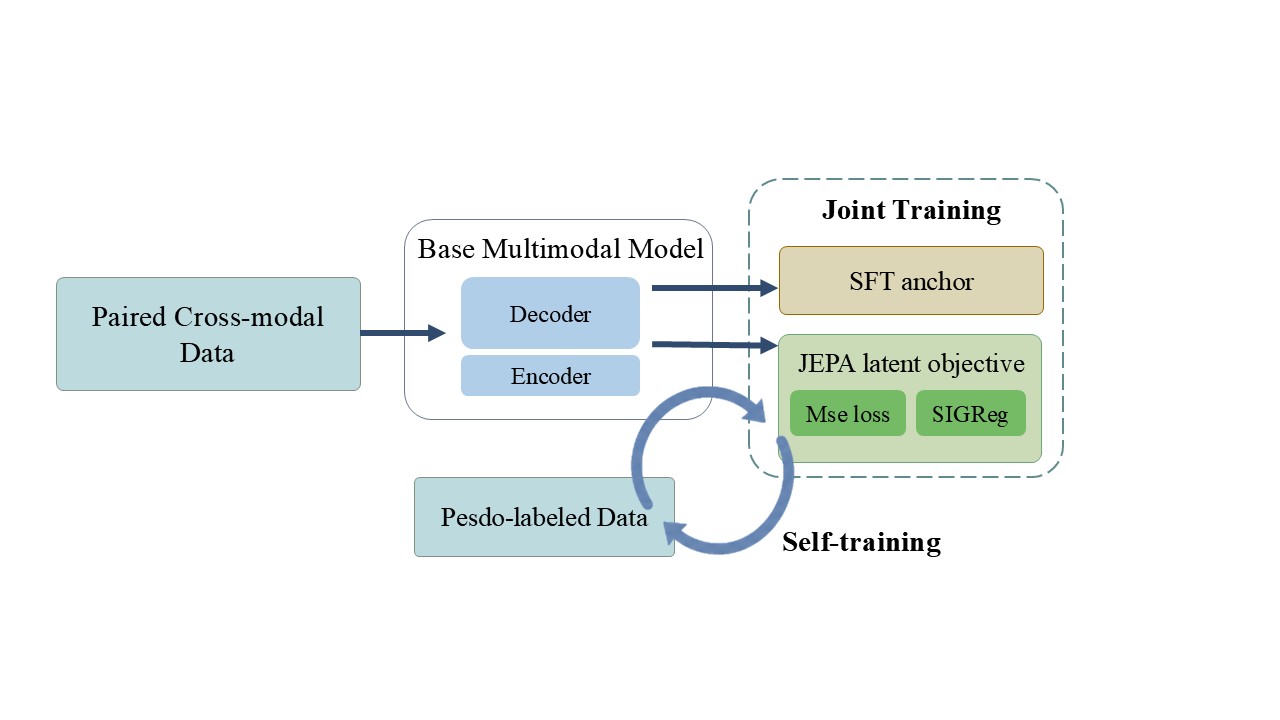}
\caption{
Macro view of \method{}.
Paired image and SMILES views are processed by one image-to-SMILES VLM, while SFT anchors causal generation and the JEPA objective regularizes intermediate hidden states through latent MSE and SIGReg.
The latent self-training loop is shown only as an abstract training extension; data-source and curriculum details are separated from this core mechanism view.
}
\label{fig:macro-flow}
\end{figure}

Autoregressive SFT learns exact translation from reliable pairs, while the latent branch extracts predictive constraints from paired data, including synthetic examples already used during pretraining.
The latent branch masks coherent image regions and symbolic spans, then predicts the corresponding intermediate states from the remaining multimodal context so that evidence from either modality can support the other.
Both branches update the same backbone, allowing latent MSE to shape the representations used for causal decoding while modality-wise SIGReg prevents collapse~\citep{balestriero2025lejepa}.
In the gold-pair setting, this provides a distinct use for the synthetic pretraining corpus after real-image adaptation: its pairs supervise latent recovery while Real-SFT continues to supervise output tokens.

\method{} further provides a noise-tolerant route for self-training with imperfect pseudo-labeled pairs.
Instead of promoting model-generated tokens to exact targets, it learns cross-modal structure that remains predictable from the paired image and sequence, thereby refining the shared representation without converting local pseudo-label errors into categorical supervision.

We evaluate \method{} on OCSR as a stringent instance of continuous-to-symbolic translation.
Under a shared exact-match protocol, clean-pair training improves both WildMol Simple and Markush over the SFT anchor.
The SFT anchor and gold-pair AC-JEPA share complete MolParser-7M synthetic pretraining followed by Real-SFT adaptation, so their comparison concerns reuse of existing pairs through latent prediction rather than access to additional unique data.
Using the same model-generated pairs, latent self-training provides further gains, whereas direct pseudo-label SFT degrades.
These results support SFT-anchored latent prediction as an effective training strategy for exact cross-modal translation with both curated and imperfect pairs.

Our contributions are:
\begin{enumerate}[leftmargin=*]
  \item We introduce \method{}, an SFT-compatible framework integrating cross-modal latent prediction with causal decoding for exact visual-to-symbolic translation.
  \item We develop a bidirectional cross-modal latent-learning mechanism that captures structure predictable from multimodal context while maintaining non-degenerate representations within each modality.
  \item We derive local bounds that relate decoder-relevant latent residuals to next-token decision stability and characterize clean-output stability under pseudo-label perturbations.
  \item We validate \method{} on OCSR, showing gains over SFT with clean pairs and over direct pseudo-label SFT during self-training.
\end{enumerate}

\section{Related Work}
\label{sec:related-work}
\paragraph{Multimodal representation learning.}
Multimodal representation learning seeks to capture content shared across paired modalities while retaining the fine-grained information required for modality-specific prediction.
Contrastive methods such as CLIP and ALIGN align global image--text representations and are effective at learning factors shared across paired observations~\citep{radford2021clip,jia2021align}.
Fusion and generative models such as ALBEF and BLIP combine alignment with cross-modal interaction or language generation, while BEiT-3 extends masked modeling across images, text, and paired inputs~\citep{li2021albef,li2022blip,wang2022beit3}.
These objectives support transferable multimodal representations and capable generators, but exact scientific translation can additionally depend on modality-specific local structure that is not captured by global correspondence alone.
\method{} targets this setting by applying local masked-state prediction to modality-indexed positions within the same fused representation used for autoregressive decoding.
SFT preserves the task-specific output interface, while the latent objective requires the intermediate states to retain cross-modal structure recoverable from the remaining context.

\paragraph{Latent predictive learning and multimodal JEPA.}
Latent predictive learning moves the prediction target from the observation space to a learned representation space.
Whereas masked autoencoding reconstructs missing observations, data2vec predicts contextualized latent targets across speech, vision, and language, and I-JEPA predicts representations of image regions from visible context~\citep{he2022mae,baevski2022data2vec,assran2023ijepa}.
The abstraction induced by such objectives depends on what is masked, which context remains visible, and how the target representation is constructed.
V-JEPA 2 extends latent prediction to video, while LeJEPA couples predictability with SIGReg-based control of representation collapse~\citep{assran2025vjepa2,balestriero2025lejepa}.
Recent cross-modal variants impose different predictive relations: CrossJEPA transfers from a frozen image target to a point-cloud encoder, VL-JEPA predicts text embeddings from visual input and a query, and MJEPA performs intra- and cross-modal prediction with a unified audio-visual encoder~\citep{perera2025crossjepa,chen2025vljepa,teotia2026mjepa}.
Their primary output is an encoder representation or embedding prediction.
In contrast, \method{} places bidirectional masked-state prediction inside the same autoregressive image-to-SMILES generator that is trained and deployed for exact symbolic translation.
The central compatibility constraint is therefore not latent prediction alone, but preserving an intermediate state that remains useful to the upper causal decoder.

\paragraph{Self-training with imperfect pseudo-labels.}
Self-training expands supervised data by using a trained model to generate targets for unlabeled examples~\citep{lee2013pseudolabel}.
Modern methods reduce confirmation errors through confidence filtering, consistency regularization, augmentation, teacher--student updates, or momentum distillation~\citep{sohn2020fixmatch,xie2020noisystudent,li2021albef}.
Their effectiveness nevertheless depends on the structure of the teacher's errors, since recurring mistakes can be reinforced when predictions are reused as hard supervision~\citep{song2020noisylabels}.
This issue is particularly consequential for symbolic generation: a sequence may be syntactically plausible and locally close to the correct output while still representing a different object.
Direct pseudo-label SFT treats such a token substitution as an exact categorical target.
\method{} instead uses the pseudo-sequence as part of a masked multimodal view and learns by predicting its continuous intermediate states.
Our experiments compare these two training routes using the same pseudo-labeled data rather than attributing their difference to data selection.

\paragraph{Optical chemical structure recognition.}
\ocsr{} converts molecular diagrams from papers, patents, and scanned documents into machine-readable molecular graphs or line notations such as SMILES~\citep{weininger1988smiles}.
Unlike ordinary text recognition, it must recover graph topology, stereochemistry, abbreviations, drawing conventions, and, in the Markush setting, variable molecular structures.
Recent neural systems approach this problem through direct image-to-string generation, chemistry-constrained graph prediction, graph classification, or large-scale end-to-end recognition~\citep{campos2021img2smi,qian2023molscribe,morin2023molgrapher,fang2024molparser}.
MolSight and MolRecBench-Wild further expose the importance of noisy pretraining and generalization to real scientific documents~\citep{zhang2025molsight,yang2026molrecbench}.
Most progress in OCSR has focused on architectures, output representations, data scale, and decoding constraints.
Our work studies a complementary training question: how an SFT-capable generator can learn additional cross-modal structure from clean and pseudo-labeled pairs through an auxiliary latent predictive objective.

\section{Method}
\label{sec:method}
\method{} integrates autoregressive SFT and latent prediction within one image-to-SMILES VLM.
SFT trains the model through autoregressive next-token prediction, whereas JEPA predicts latent states at an intermediate decoder layer.
Both losses update the shared backbone, so latent prediction shapes the representation used by the generator rather than a separate pretrained encoder.
A bidirectional predictor recovers masked states using the remaining multimodal context, and modality-wise SIGReg prevents latent collapse.
The same latent objective subsequently incorporates pseudo-labeled pairs without treating their generated SMILES as default token-level targets.
\Cref{fig:model-structure} shows the resulting architecture and training losses.
\Cref{subsec:sft-anchor,subsec:jepa-objective,subsec:curriculum,subsec:conditional-analysis} describe the latent interface, predictive objective, training curriculum, and analysis, respectively.

\begin{figure}[h]
\centering
\includegraphics[width=\linewidth]{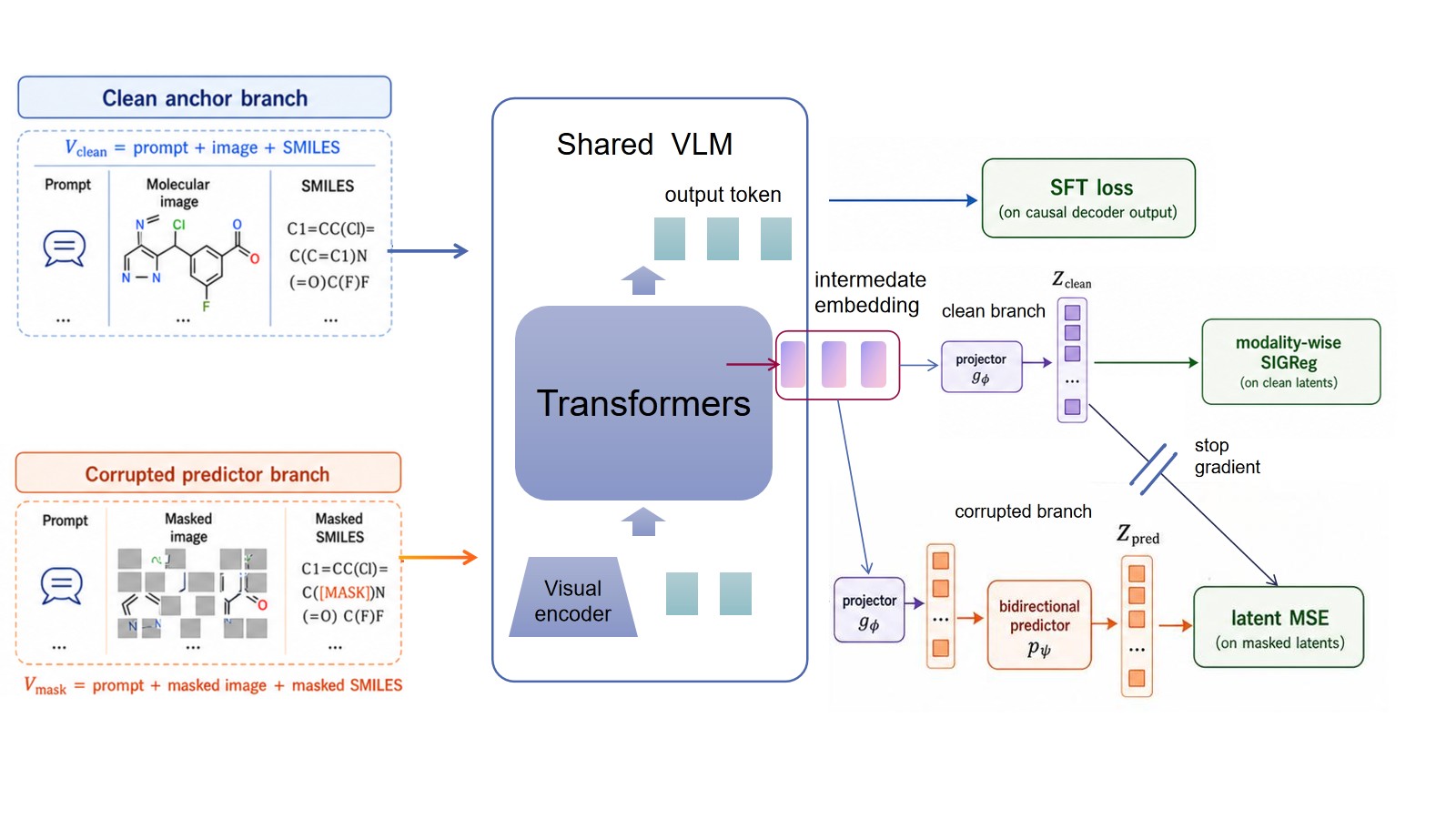}
\caption{
Model structure and training losses of \method{}.
SFT supervises the causal decoder, while JEPA predicts intermediate latents under structured masking.
The clean latent is detached only as the MSE target, while SIGReg regularizes the undetached clean image and SMILES latents separately.
}
\label{fig:model-structure}
\end{figure}

\subsection{SFT-Anchored Backbone and Latent Interface}
\label{subsec:sft-anchor}

The position of the latent interface determines whether predictive learning can affect both cross-modal fusion and downstream decoding.
We attach it to an intermediate decoder layer after visual and symbolic context have been fused but before next-token logits are produced.
The JEPA path projects this state for latent prediction, whereas the SFT path continues through the remaining causal decoder and language-model head.
Both losses update the shared model below the interface, while SFT directly supervises the upper decoding path.
This arrangement keeps the latent objective inside the task-capable generator while preserving the causal output path used at inference.

Let $I$ be a molecular image, $S=(S_1,\ldots,S_T)$ its gold SMILES sequence, and $\pi$ the task prompt.
The SFT anchor trains the backbone $f_\theta$ autoregressively,
\begin{equation}
  \mathcal{L}_{\mathrm{SFT}}
  =
  -\sum_{t=1}^{T}
  \log p_\theta(S_t \mid \pi, I, S_{<t}) .
  \label{eq:sft-loss}
\end{equation}
This objective ties the shared representation to the grammar and causal decoding procedure used at inference.

For a paired symbolic view $S_{\mathrm{view}}$, we construct clean and masked multimodal views
\begin{equation}
  \begin{aligned}
  \mathbf{V}_{\mathrm{clean}}
  &=(\pi,I,S_{\mathrm{view}}),\\
  \mathbf{V}_{\mathrm{mask}}
  &=
  m(\mathbf{V}_{\mathrm{clean}};
  \mathcal{M}_{\mathrm{img}},\mathcal{M}_{\mathrm{smi}}).
  \end{aligned}
  \label{eq:multimodal-views}
\end{equation}
where $\mathcal{M}_{\mathrm{img}}$ and $\mathcal{M}_{\mathrm{smi}}$ are the masked image and SMILES positions.
Let $H_\theta^\ell(\mathbf{V})$ be the hidden sequence at decoder layer $\ell$, $g_\phi$ a projector, and $p_\psi$ the predictor.
The projector maps the fused hidden states into a compact space in which the masked-view representation is trained to match its clean counterpart.
The clean latent, detached target, and masked-view prediction are
\begin{equation}
  \begin{aligned}
  Z_{\mathrm{clean}}&=g_\phi(H_\theta^\ell(\mathbf{V}_{\mathrm{clean}})),&
  Z_{\mathrm{tgt}}&=\operatorname{sg}(Z_{\mathrm{clean}}),\\
  Z_{\mathrm{pred}}&=
  p_\psi(g_\phi(H_\theta^\ell(\mathbf{V}_{\mathrm{mask}}))),
  \end{aligned}
  \label{eq:latent-prediction}
\end{equation}
where $\operatorname{sg}$ blocks predictive-MSE gradients through the clean target.
The clean branch therefore supplies a fixed target for each prediction step, while both the masked branch and the shared backbone below the interface remain trainable.
Latent recovery is optimized only at masked positions and separately for each modality,
\begin{equation}
  \begin{aligned}
  \mathcal{L}_{\mathrm{MSE}}
  &=
  \sum_{m\in\{\mathrm{img},\mathrm{smi}\}}
  \frac{\lambda_{\mathrm{mse}}^{m}}{|\mathcal{M}_m|}
  \sum_{i\in\mathcal{M}_m}\\[-2pt]
  &\qquad
  \left\|
    Z_{\mathrm{pred},i}-Z_{\mathrm{tgt},i}
  \right\|_2^2 .
  \end{aligned}
  \label{eq:jepa-mse}
\end{equation}
The continuous loss complements rather than replaces token-level SFT.
SFT anchors exact generation and the output grammar, while masked latent prediction directly constrains the shared intermediate representation consumed by the decoder.

\subsection{Bidirectional Prediction and Latent Regulation}
\label{subsec:jepa-objective}

AC-JEPA trains its intermediate representation by recovering masked image and symbolic states from the remaining multimodal context, while modality-wise regulation keeps the predictive target space non-degenerate.
We construct the recovery task by replacing contiguous SMILES spans and rectangular image-patch blocks with modality-specific mask tokens; prompt, padding, and special tokens remain unmasked.
Masking coherent regions removes local evidence and makes recovery rely on the surrounding structure available within and across modalities.
Bidirectional attention lets the predictor recover masked states from the full corrupted multimodal context, while the SFT generation path remains causal.
\Cref{app:structured-corruption} specifies the structured masking procedure.

Because predictive objectives can admit collapsed representations, we apply the Sketched Isotropic-Gaussian Regularizer (SIGReg) to the clean latent distribution~\citep{balestriero2025lejepa}.
For modality $m$, let $\mathcal{I}_m$ contain its eligible clean-view positions in a mini-batch and $\mathcal{Z}_m=\{Z_{\mathrm{clean},i}:i\in\mathcal{I}_m\}$.
We regularize image and SMILES states separately,
\begin{equation}
  \mathcal{L}_{\mathrm{SIGReg}}
  =
  \sum_{m\in\{\mathrm{img},\mathrm{smi}\}}
  \lambda_{\mathrm{sig}}^{m}
  \mathrm{SIGReg}(\mathcal{Z}_m).
  \label{eq:sigreg-total}
\end{equation}
This separation prevents one modality's variance from hiding collapse in the other.
SIGReg acts on the undetached clean latents, whereas stop-gradient applies only to the MSE target.
Accordingly, stop-gradient stabilizes each prediction target, whereas SIGReg constrains the representation from which the targets are formed.
The complete latent objective is
$\mathcal{L}_{\mathrm{JEPA}}=
\mathcal{L}_{\mathrm{MSE}}+\mathcal{L}_{\mathrm{SIGReg}}$.

\subsection{From SFT Anchoring to Latent Self-Training}
\label{subsec:curriculum}

The training curriculum first establishes exact translation on reliable pairs and then uses the latent objective to incorporate open-environment images without gold sequences.
This ordering is important because the JEPA branch should shape an already task-capable representation rather than define the symbolic task from scratch.
\Cref{fig:training-workflow} summarizes the four training stages.
Base SFT and all AC-JEPA variants share domain continued pretraining (CPT) on the complete synthetic MolParser-7M image--SMILES corpus, followed by autoregressive fine-tuning on Real-SFT 100K~\citep{fang2024molparser}.
Synthetic pretraining establishes broad task competence, while real-image fine-tuning adapts the generator to document images.
We freeze this Base SFT backbone to warm up the new JEPA modules; gold-pair joint training and each latent self-training run then initialize independently from the resulting checkpoint.

\begin{figure}[h]
\centering
\includegraphics[width=\linewidth]{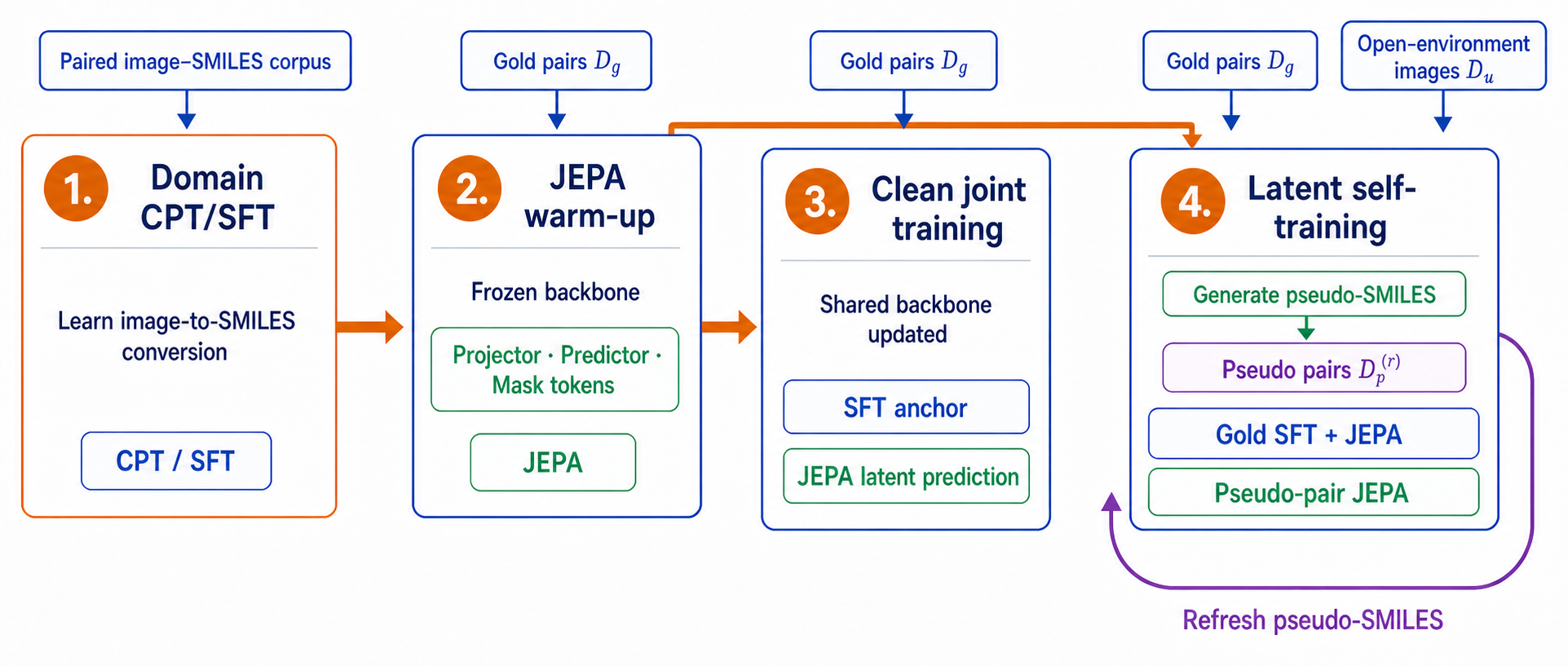}
\caption{
Training curriculum.
Base SFT first learns from the complete synthetic MolParser-7M corpus and is then fine-tuned on Real-SFT 100K.
After JEPA-module warm-up with that backbone frozen, gold-pair joint training and latent self-training start independently from the shared checkpoint.
Gold-pair joint training reuses the already-seen 7M pairs for latent prediction while retaining Real-SFT for token supervision.
Self-training adds open-environment images through model-generated pseudo-SMILES while gold-pair SFT remains active as the anchor; pseudo-label sets are refreshed between rounds.
}
\label{fig:training-workflow}
\end{figure}

Let $\mathcal{D}_{\mathrm{sft}}$ contain the reliable pairs assigned token-level supervision and $\mathcal{D}_{\mathrm{jepa}}^{g}$ contain the gold pairs used for latent prediction.
The anchored objective is
\begin{equation}
  \begin{aligned}
  \mathcal{L}_{\mathrm{anchor}}
  &=
  \lambda_{\mathrm{sft}}
  \mathbb{E}_{(I,S)\sim\mathcal{D}_{\mathrm{sft}}}
  [\mathcal{L}_{\mathrm{SFT}}(I,S)]\\
  &\quad+
  \mathbb{E}_{(I,S)\sim\mathcal{D}_{\mathrm{jepa}}^{g}}
  [\mathcal{L}_{\mathrm{JEPA}}(I,S)] .
  \end{aligned}
  \label{eq:anchor-objective}
\end{equation}
The two datasets may overlap but need not coincide: joint training means that both losses update the shared backbone, not that they are evaluated on the same examples.
In our gold-pair setting, $\mathcal D_{\mathrm{sft}}$ is Real-SFT and $\mathcal D_{\mathrm{jepa}}^g$ reuses the synthetic MolParser-7M pairs already used during CPT; no unique paired examples are added to the Base SFT training history.
Our small-scale mixed-SFT observation in \Cref{subsec:main-results} motivates retaining real-image token supervision while reusing synthetic pairs through latent prediction.
At refresh round $r$, the current model generates $\hat S_i^{(r)}$ for each $I_i\in\mathcal D_u$, forming
$\mathcal D_p^{(r)}=\{(I_i,\hat S_i^{(r)})\}$.
\method{} then treats each pseudo-SMILES as the symbolic view $S_{\mathrm{view}}$ of a masked latent prediction problem rather than as default token-level supervision:
\begin{equation}
  \mathcal{L}^{(r)}
  =
  \mathcal{L}_{\mathrm{anchor}}
  +
  \eta_r
  \mathbb{E}_{(I,\hat{S})\sim\mathcal{D}_p^{(r)}}
  [
    q(I,\hat{S})\,\mathcal{L}_{\mathrm{JEPA}}(I,\hat{S})
  ],
  \label{eq:latent-self-training-objective}
\end{equation}
where $q(I,\hat S)$ is a selection or reliability weight and $\eta_r$ controls the pseudo-labeled stream.
The supervised stream continues to supply the token-level anchor throughout this stage.
Direct pseudo-label SFT is retained as a comparison, so the two routes differ in how the same pseudo-labeled data provide supervision.
\Cref{alg:ac-jepa-training} summarizes the complete training procedure.

\subsection{Local Analysis of Task-Aligned Representations and Noise Tolerance}
\label{subsec:conditional-analysis}

To obtain a tractable account of the proposed mechanism, we analyze a local model of AC-JEPA under simplifying assumptions on decoder margins, subspace overlap, anchored curvature, and pseudo-label influence.
The resulting bounds identify conditions under which lower decoder-relevant latent residuals tighten the next-token decision-reversal bound and pseudo-label perturbations remain within the clean-output margin.
They provide a mechanism-level interpretation rather than a general guarantee for the full nonlinear training dynamics.
Detailed proofs and further interpretation are provided in \Cref{app:conditional-analysis-proofs}.

At masked SMILES position $i$, let $H_i$ be the clean intermediate state, $C_i$ the corrupted multimodal context, and
$\Delta_{i,s^-}(H_i)\approx d_{i,s^-}^{\top}H_i+c_{i,s^-}$
the logit gap between the correct next token $s^+$ and competitor $s^-$.
Let $\mu_{i,s^-}(C_i)>0$ and $v_{i,s^-}(C_i)$ be its conditional mean and variance, representing the average correct-token margin and residual variation.
With context-conditioned state residual $U_i=H_i-\mathbb E[H_i\mid C_i,S_{i+1}=s^+]$ and latent-interface Jacobian $J_i$, define the decoder-relevant residual energy
$R_{\mathrm J,i}(C_i)=\mathbb E[\|J_iU_i\|_2^2\mid C_i,S_{i+1}=s^+]$.
We assume the decoder margin direction is controlled through this interface:
$d_{i,s^-}d_{i,s^-}^{\top}\preceq\kappa J_i^\top J_i$.

\begin{theorem}[SFT readout stability under latent residuals]
\label{thm:task-anchored-readout}
Let
$P_{i,s^-}=\Pr[\Delta_{i,s^-}(H_i)\leq0\mid C_i,S_{i+1}=s^+]$.
Under the local model above,
\begin{equation}
  \begin{aligned}
  P_{i,s^-}
  &\leq
  \frac{v_{i,s^-}(C_i)}
       {v_{i,s^-}(C_i)+\mu_{i,s^-}(C_i)^2}\\
  &\leq
  \frac{\kappa R_{\mathrm J,i}(C_i)}
       {\kappa R_{\mathrm J,i}(C_i)+\mu_{i,s^-}(C_i)^2}.
  \end{aligned}
  \label{eq:conditional-readout-bound}
\end{equation}
Thus, at a fixed positive mean margin, reducing the decoder-relevant latent residual tightens the decision-reversal bound.
\end{theorem}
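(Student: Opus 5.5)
The plan is two steps: apply the one-sided Chebyshev (Cantelli) inequality to the scalar logit gap, then bound the conditional variance by the decoder-relevant residual energy via the domination assumption. All probabilities, means and variances below are conditional on $(C_i,S_{i+1}=s^+)$. Under the local linear model, $\Delta:=\Delta_{i,s^-}(H_i)=d^\top H_i+c$, with $d=d_{i,s^-}$ and $c=c_{i,s^-}$ treated as fixed given the context. Its conditional mean is $\mu=\mu_{i,s^-}(C_i)>0$ and its variance is $v=v_{i,s^-}(C_i)$. The event $\{\Delta\le 0\}$ coincides with $\{\Delta-\mu\le-\mu\}$. Cantelli's inequality, $\Pr[X-\mathbb E X\le -a]\le \operatorname{Var}(X)/(\operatorname{Var}(X)+a^2)$ for $a>0$, applied with $a=\mu$, gives the first inequality directly.

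For the second inequality, write $\Delta-\mu=d^\top(H_i-\mathbb E[H_i\mid C_i,S_{i+1}=s^+])=d^\top U_i$. The additive constant $c$ and the conditional mean cancel because, in the local model, $d$ and $c$ are measurable with respect to the conditioning. Then
\[
v=\mathbb E[(d^\top U_i)^2]=\mathbb E[U_i^\top dd^\top U_i]\le \kappa\,\mathbb E[U_i^\top J_i^\top J_iU_i]=\kappa\,\mathbb E[\|J_iU_i\|_2^2]=\kappa R_{\mathrm J,i}(C_i),
\]
where the inequality uses the Loewner bound $dd^\top\preceq\kappa J_i^\top J_i$ pointwise inside the expectation. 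Since $x\mapsto x/(x+\mu^2)$ is nondecreasing on $[0,\infty)$ when $\mu^2>0$, we may replace $v$ by the larger quantity $\kappa R_{\mathrm J,i}$, which gives the second inequality. The same monotonicity, read in $R_{\mathrm J,i}$ at fixed $\mu$, yields the concluding remark: lowering the decoder-relevant residual tightens the bound.

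I expect the only delicate step to be justifying that $\Delta-\mu=d^\top U_i$ exactly. This needs $d$, $c$ and $J_i$ to be (approximately) fixed given $C_i$, and it needs the "$\approx$" linearization to be treated as an equality within the local model. I would state this explicitly as part of "the local model above": $\Delta$ is affine in $H_i$ with context-determined coefficients, $J_i$ is context-determined, and second moments are finite. If $J_i$ were allowed to vary with $H_i$, the pointwise Loewner bound would still suffice, because the domination is applied inside the expectation before integrating. The remaining steps are routine.
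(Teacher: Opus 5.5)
Your proposal is correct and follows the paper's proof essentially step for step. Both proofs apply Cantelli's inequality to $\Delta_{i,s^-}-\mu_{i,s^-}=d_{i,s^-}^{\top}U_i$, bound $v_{i,s^-}$ by $\kappa R_{\mathrm J,i}$ through the Loewner condition $d_{i,s^-}d_{i,s^-}^{\top}\preceq\kappa J_i^\top J_i$ applied inside the expectation, and then use monotonicity of $x/(x+\mu^2)$. The paper handles your ``delicate step'' the same way, by fixing the other causal states, treating the readout as locally affine, and restricting the nonlinear case to where the linearization holds.
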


The symbolic MSE decomposes as
$\mathcal L_{\mathrm{MSE}}^{\mathrm{smi}}
=\overline R_{\mathrm J}^{\mathrm{smi}}+\mathcal E_{\mathrm{pred}}^{\mathrm{smi}}$.
After predictor warm-up, reducing MSE beyond the initial excess risk lowers the Bayes latent residual and tightens the averaged bound under decoder--JEPA overlap and SFT margin preservation.

To quantify pseudo-label effects on clean outputs, consider $n$ pseudo constraints and take as the clean reference a stationary point $\theta^\star$ of the counterfactual anchored objective with their corresponding true sequences $\{S_j\}_{j=1}^{n}$.
For pseudo constraint $j$, define the latent residual $r_j$ and its perturbation $e_j$ after replacing $S_j$ with $\widehat S_j$:
\begin{equation}
  \begin{aligned}
  r_j(\theta;S_j)
  &=
  Z_{\mathrm{pred},j}(\theta;S_j)
  -
  Z_{\mathrm{tgt},j}(\theta;S_j),\\
  e_j
  &=
  r_j(\theta^\star;\widehat S_j)
  -
  r_j(\theta^\star;S_j).
  \end{aligned}
  \label{eq:pseudo-latent-residual-perturbation}
\end{equation}
Let $A_j=\partial r_j(\theta;S_j)/\partial\theta|_{\theta^\star}$ be the clean-residual Jacobian, $K\succ0$ the local anchored curvature, and $\lambda$ the effective pseudo-MSE weight.
Under the local shared-Jacobian approximation, with the Jacobian-change and SIGReg remainders made explicit in \Cref{app:pseudolabel-output-influence}, the first-order parameter shift is
\begin{equation}
  u_{\mathrm{pseudo}}
  \approx
  -\frac{\lambda}{n}K^{-1}
  \sum_{j=1}^{n}A_j^\top e_j .
  \label{eq:pseudo-parameter-influence}
\end{equation}
For clean context $x$, let $m_{x,yk}>0$ be the margin between correct token $y$ and competitor $k$, $d_{x,yk}$ its state gradient, and $J_x$ the parameter-to-state Jacobian.
Pseudo constraint $j$ has signed margin influence
\begin{equation}
  \zeta_{x,yk,j}
  =
  d_{x,yk}^{\top}J_xK^{-1}A_j^\top e_j .
  \label{eq:pseudo-output-influence}
\end{equation}

\begin{theorem}[Clean-output stability under pseudo-label perturbations]
\label{thm:anchored-latent-self-training}
Suppose the influences are independent with
$\mathbb E[\zeta_{x,yk,j}]=\beta_{x,yk}$ and
$\operatorname{Var}(\zeta_{x,yk,j})\leq\sigma_{x,yk}^2$.
The first-order margin is
$G_{x,yk}^{\mathrm{lin}}=m_{x,yk}-\lambda n^{-1}\sum_j\zeta_{x,yk,j}$.
If $m_{x,yk}>\lambda\beta_{x,yk}$, then
\begin{equation}
  \Pr[G_{x,yk}^{\mathrm{lin}}\leq0]
  \leq
  \frac{\lambda^2\sigma_{x,yk}^2/n}
       {\lambda^2\sigma_{x,yk}^2/n+
        (m_{x,yk}-\lambda\beta_{x,yk})^2}.
  \label{eq:pseudo-output-stability}
\end{equation}
\end{theorem}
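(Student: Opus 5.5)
The bound is the one-sided Chebyshev (Cantelli) inequality applied to the first-order margin $G^{\mathrm{lin}}_{x,yk}$, which we treat as a random variable through the independent influences $\zeta_{x,yk,j}$. This is the same device as the first inequality of \Cref{thm:task-anchored-readout}. The plan is to compute the mean and variance of $G^{\mathrm{lin}}_{x,yk}$, rewrite the event $\{G^{\mathrm{lin}}_{x,yk}\le 0\}$ as a one-sided deviation below the mean, and apply Cantelli.

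\textbf{Mean.} By linearity of expectation and $\mathbb E[\zeta_{x,yk,j}]=\beta_{x,yk}$,
\[
\mathbb E[G^{\mathrm{lin}}_{x,yk}]
= m_{x,yk}-\frac{\lambda}{n}\sum_{j=1}^{n}\beta_{x,yk}
= m_{x,yk}-\lambda\beta_{x,yk}=:\mu ,
\]
which is positive by the hypothesis $m_{x,yk}>\lambda\beta_{x,yk}$. Here the margin $m_{x,yk}$ and the weight $\lambda$ are deterministic at the clean reference $\theta^\star$.

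\textbf{Variance.} Independence makes the variance additive, so
\[
\operatorname{Var}(G^{\mathrm{lin}}_{x,yk})
=\frac{\lambda^2}{n^2}\sum_{j=1}^{n}\operatorname{Var}(\zeta_{x,yk,j})
\le \frac{\lambda^2\sigma^2_{x,yk}}{n}=:s^2 .
\]
Only uncorrelatedness is actually needed, so the independence assumption is more than sufficient.

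\textbf{Cantelli step.} Since $\mu>0$, the event $\{G^{\mathrm{lin}}_{x,yk}\le 0\}$ equals $\{G^{\mathrm{lin}}_{x,yk}-\mu\le -\mu\}$. Cantelli's inequality gives
\[
\Pr[G-\mu\le -t]\le \frac{V}{V+t^2}, \qquad t>0,
\]
with $V$ the true variance. Taking $t=\mu$, we then use that $V\mapsto V/(V+\mu^2)$ is nondecreasing in $V$, so replacing $V$ by its upper bound $s^2$ yields \eqref{eq:pseudo-output-stability}. If a self-contained argument is wanted, Cantelli follows in one line: for any $a\ge 0$, apply Markov's inequality to $(\mu-G+a)^2$, which gives $\Pr[\mu-G\ge\mu]\le (V+a^2)/(\mu+a)^2$, and choose $a=V/\mu$.

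\textbf{Main obstacle.} The probability computation is routine. The real difficulty is interpretive: stating exactly what is random and confirming that $G^{\mathrm{lin}}$ is the correct object. The intended reading is that $G^{\mathrm{lin}}$ is the first-order expansion $m_{x,yk}+d_{x,yk}^\top J_x u_{\mathrm{pseudo}}$, with $u_{\mathrm{pseudo}}$ taken from \eqref{eq:pseudo-parameter-influence}. Substituting that expression, together with the definition \eqref{eq:pseudo-output-influence} of $\zeta$, reproduces the stated formula for $G^{\mathrm{lin}}$. I would note this substitution explicitly and state that the bound is claimed only for this linearized margin. The Jacobian-change and SIGReg remainders deferred to \Cref{app:pseudolabel-output-influence} would be handled separately and are not part of this proof.
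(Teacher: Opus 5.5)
Your proposal is correct and follows the paper's proof: you substitute $u_{\mathrm{pseudo}}$ into the first-order margin expansion to get $G^{\mathrm{lin}}_{x,yk}=m_{x,yk}-\lambda\overline\zeta_{x,yk}$, bound the mean and variance using independence, and apply Cantelli. The only cosmetic difference is that the paper divides the crossing condition by $\lambda$, applies Cantelli to $\overline\zeta_{x,yk}$ and rescales, whereas you apply it directly to $G^{\mathrm{lin}}_{x,yk}$, which avoids the implicit assumption $\lambda>0$.
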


The bound separates mean signed influence $\beta_{x,yk}$, whose positive values consume clean margin, from variance that decreases across independent pseudo constraints.
Conditional centering $\mathbb E[e_j\mid A_j]=0$ is sufficient for zero mean first-order influence under this local model.

\section{Experiments}
\label{sec:experiments}
\subsection{Experimental Setup}
\label{subsec:setup}
All experiments use Qwen3-VL-2B~\citep{bai2025qwen3vl} as the backbone and draw paired training data from UniParser/MolParser-7M~\citep{fang2024molparser}.
Base SFT and all AC-JEPA variants share the same two-stage training history: autoregressive domain pretraining on the complete synthetic MolParser-7M corpus, followed by fine-tuning on Real-SFT 100K.
We use ``MolParser-7M'' below for the synthetic pretraining corpus and ``Real-SFT'' for the real-image adaptation set.
Both data sources have therefore already contributed to Base SFT; the gold-pair comparison introduces no additional unique training pairs.
Following \Cref{subsec:curriculum}, we warm up the newly initialized JEPA modules with this backbone frozen and then jointly optimize token-level SFT on Real-SFT and latent prediction on the previously seen synthetic pairs.
Latent self-training subsequently extends the JEPA stream to pseudo-labeled open-environment images, while direct SFT on the same pseudo-labels serves as the token-level self-training baseline.
For rounds R1 and R2, the unlabeled pool includes molecular images from USPTO-Mol and structure figures extracted from scientific papers.
We select checkpoints on the MolParser-7M validation split and evaluate them on the WildMol Simple and Markush sets, each containing 10,000 images collected from scientific papers and patents~\citep{fang2024molparser}.
Under a shared decoding and evaluation protocol, we report canonicalized exact-match accuracy using an RDKit-based evaluator customized for extended SMILES (E-SMILES).

\subsection{Results and Training Dynamics Across the Curriculum}
\label{subsec:main-results}
We first evaluate how performance evolves across the AC-JEPA training curriculum, from the Base SFT model to gold-pair joint training and two rounds of latent self-training.
After warming up the newly initialized JEPA modules with the backbone frozen, all three AC-JEPA configurations start from the resulting checkpoint and are optimized independently.
Gold-pair AC-JEPA retains Real-SFT as the token-level anchor and reuses the previously seen synthetic MolParser-7M pairs through JEPA.
For R1 and R2, the unlabeled pool includes molecular images from USPTO-Mol and structure figures extracted from scientific papers.
Each round generates pseudo-SMILES for this pool and adds the resulting pairs to the JEPA stream, while reliable-pair SFT remains the task anchor.

\begin{table}[h]
  \centering
  \caption{Accuracy (\%) across the AC-JEPA curriculum on WildMol Simple and Markush. Every row in our curriculum inherits the same MolParser-7M pretraining and Real-SFT fine-tuning; gold-pair AC-JEPA reuses existing pairs through JEPA. MolParser-Base is a published reference from \citet{fang2024molparser}, with a different architecture and optimization recipe.}
  \label{tab:main-results}
  \begin{tabular}{lcc}
    \toprule
    Method & Simple Acc. (\%) & Markush Acc. (\%) \\
    \midrule
    \multicolumn{3}{l}{\emph{Published reference}} \\
    MolParser-Base & 76.90 & 38.10 \\
    \addlinespace
    \multicolumn{3}{l}{\emph{Our curriculum}} \\
    Base SFT & 67.40 & 36.48 \\
    Gold-pair \method{} & 76.07 & 42.28 \\
    \method{} self-training, round 1 & 78.51 & 42.78 \\
    \method{} self-training, round 2 & \textbf{80.23} & \textbf{48.28} \\
    \bottomrule
  \end{tabular}
\end{table}

\Cref{tab:main-results} summarizes performance across the proposed training curriculum.
Starting from Base SFT, joint gold-pair training with \method{} raises accuracy by 8.67 points on Simple and 5.80 points on Markush.
This configuration reaches comparable Simple accuracy to MolParser-Base and improves Markush by 4.18 points.

Self-training R1 and R2 extend latent prediction with these pseudo-pairs while retaining the SFT anchor on clean data.
R2 reaches 80.23\% on Simple and 48.28\% on Markush, exceeding the independently initialized gold-pair configuration by 4.16 and 6.00 points, respectively.
The higher endpoint accuracies of both self-training runs show that the pseudo-pair streams benefit AC-JEPA in this OCSR setting.

\begin{figure*}[t]
\centering
\includegraphics[width=\textwidth]{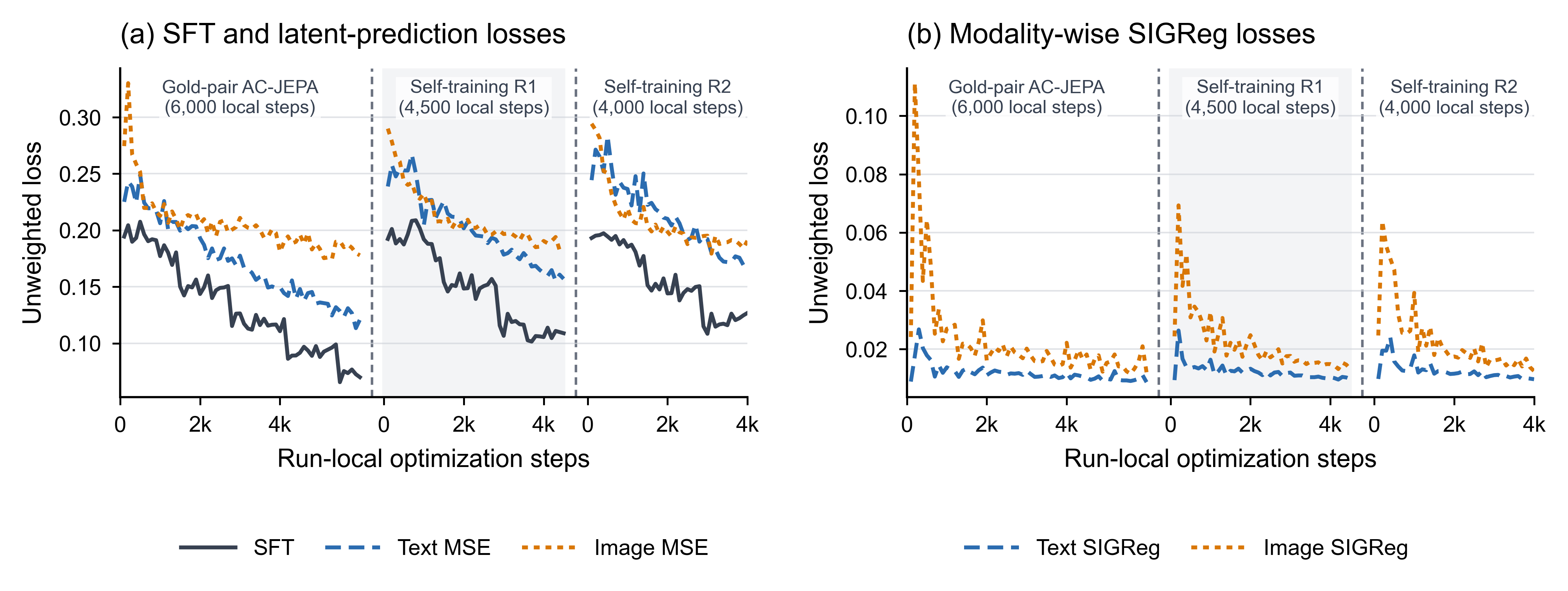}
\caption{
Component losses from three independent AC-JEPA runs initialized from the same CPT-trained, JEPA-warmed checkpoint.
Curves show every 100 local optimization steps for gold-pair training (6,000 steps), self-training R1 (4,500 steps), and self-training R2 (4,000 steps).
The blocks are juxtaposed for compact comparison and are not connected across runs.
All curves report unweighted loss components.
}
\label{fig:multistage-training-losses}
\end{figure*}

\Cref{fig:multistage-training-losses} reports component losses for the three independent runs.
All start from the same CPT-trained, JEPA-warmed checkpoint with separate optimization schedules, and the juxtaposed blocks show run-local rather than sequential steps.
Within each run, SFT and both modality-specific MSE losses decrease; image SIGReg also decreases, while text SIGReg remains low and bounded.
These curves show joint optimization of the token-level and latent objectives; \Cref{tab:main-results} reports endpoint accuracy.

\paragraph{Reusing synthetic pairs after real-image adaptation.}
The shared pretraining history is important for interpreting the gold-pair gain: MolParser-7M has already contributed to Base SFT, and AC-JEPA changes how its synthetic pairs are reused after Real-SFT adaptation.
In a small-scale implementation, we also used synthetic pairs from MolParser-7M alongside Real-SFT for token-level SFT.
This mixture did not improve over Real-SFT-only adaptation and was more prone to overfitting.
In contrast, retaining Real-SFT token supervision and reusing the synthetic pairs through AC-JEPA yields the improvements in \Cref{tab:main-results}.
The pilot therefore supports the importance of the supervision route rather than an explanation based simply on adding paired examples or extending mixed token-level training.
We report the pilot as qualitative design evidence, with its reporting scope specified in \Cref{app:synthetic-reuse-pilot}.

\subsection{Pseudo-Label Noise Analysis}
\label{subsec:noise}

\paragraph{Model-generated pseudo-label noise.}
We first compare two ways of using pseudo-labels generated for unlabeled open-environment images: direct token-level SFT and latent self-training with \method{}.
Both settings use model-generated pseudo-labels, which contain naturally occurring recognition and SMILES errors rather than manually specified corruptions.
\begin{table}[h]
  \centering
  \caption{Accuracy (\%) when model-generated pseudo-labels are used for direct token-level SFT or AC-JEPA latent self-training.}
  \label{tab:model-pseudolabel-results}
  \begin{tabular}{lcc}
    \toprule
    Pseudo-label route & Simple & Markush \\
    \midrule
    Direct pseudo-label SFT & 63.47 & 36.04 \\
    \method{} self-training, round 1 & 78.51 & 42.78 \\
    \method{} self-training, round 2 & \textbf{80.23} & \textbf{48.28} \\
    \bottomrule
  \end{tabular}
\end{table}

Relative to direct pseudo-label SFT, the first AC-JEPA self-training round improves Simple and Markush accuracy by 15.04 and 6.74 points, respectively.
The second round increases these margins to 16.76 and 12.24 points.
These results show a clear empirical advantage for routing pseudo-pairs through latent prediction in the current training pipeline.
Because the recorded runs do not yet match total training budget exactly, we treat this comparison as evidence from the realized curriculum rather than as an isolated causal comparison between loss functions.

\paragraph{Controlled synthetic SMILES noise.}
To compare the noise tolerance of direct SFT and AC-JEPA independently of the unknown error distribution of model-generated pseudo-labels, we construct a compact controlled testbed from the clean MolParser Real-SFT subset~\citep{fang2024molparser} and MolGrapher-Synthetic-300K~\citep{morin2023molgrapher}.
We corrupt only the SMILES labels of a controlled fraction $\rho_{\mathrm{seq}}$ of the MolGrapher examples; their paired molecular images and all Real-SFT examples remain unchanged.
For each selected MolGrapher label, the current corruption procedure perturbs 5--15\% of its SMILES characters.
In AC-JEPA training, supervised generation is anchored by the clean Real-SFT data, whereas the corrupted MolGrapher pairs are used only by the JEPA latent-prediction objective.
In the direct-SFT control, both the clean Real-SFT pairs and the corrupted MolGrapher pairs are optimized with the token-level SFT objective.
We compare these routes across increasing sequence-level noise and evaluate both on the unchanged WildMol Simple and Markush benchmarks in \Cref{tab:controlled-noise-results}.
As $\rho_{\mathrm{seq}}$ rises to 80\%, direct SFT degrades sharply, whereas \method{} retains gains over the clean Base SFT on both benchmarks.
\begin{table}[h]
  \centering
  \caption{Accuracy (\%) under controlled SMILES corruption. $\rho_{\mathrm{seq}}$ denotes the fraction of corrupted MolGrapher labels; 5--15\% of characters are perturbed in each selected label.}
  \label{tab:controlled-noise-results}
  \begin{tabular}{lccc}
    \toprule
    Objective & $\rho_{\mathrm{seq}}$ & Simple & Markush \\
    \midrule
    Direct SFT & 20\% & 67.08 & 21.85 \\
    Direct SFT & 50\% & 66.03 & 21.62 \\
    Direct SFT & 80\% & 51.37 & 4.45 \\
    \addlinespace
    \method{} joint training & 20\% & \textbf{73.67} & \textbf{45.98} \\
    \method{} joint training & 50\% & \textbf{73.26} & \textbf{45.45} \\
    \method{} joint training & 80\% & \textbf{73.07} & \textbf{43.01} \\
    \bottomrule
  \end{tabular}
\end{table}

\subsection{Ablation Studies}
\label{subsec:ablations}

\paragraph{Intermediate latent interface.}
The JEPA objective reads an intermediate decoder layer rather than the final generation state.
The current coarse sweep in \Cref{tab:layer-sweep} favors layer 17 among the four recorded choices, suggesting that states too close to the final decoder output are less effective as predictive targets.
\begin{table}[h]
  \centering
  \caption{Preliminary Simple accuracy (\%) for the recorded decoder-layer choices used as the JEPA latent interface.}
  \label{tab:layer-sweep}
  \begin{tabular}{lc}
    \toprule
    Latent-interface layer & Simple Acc. (\%) \\
    \midrule
    12 & 76.07 \\
    17 & \textbf{77.67} \\
    23 & 75.61 \\
    26 & 71.31 \\
    \bottomrule
  \end{tabular}
\end{table}

\paragraph{Masking configuration.}
\Cref{tab:mask-ratio-sweep} varies the masking ratio and masked modalities under the same structured image-block and SMILES-span corruption.
Joint image--SMILES masking at ratio 0.35 performs best, while lower or higher ratios reduce Simple accuracy.
Both SMILES-only and image-only masking remain above Base SFT, indicating that masked prediction in either modality still benefits the SFT task.
\begin{table}[h]
  \centering
  \caption{Preliminary Simple accuracy (\%) for the recorded masking-ratio and masked-modality variants.}
  \label{tab:mask-ratio-sweep}
  \begin{tabular}{lcc}
    \toprule
    Masked modalities & Ratio & Simple Acc. (\%) \\
    \midrule
    Image + SMILES & 0.20 & 75.20 \\
    Image + SMILES & 0.35 & \textbf{76.07} \\
    Image + SMILES & 0.50 & 74.60 \\
    \addlinespace
    SMILES only & 0.35 & 73.12 \\
    Image only & 0.35 & 70.54 \\
    \bottomrule
  \end{tabular}
\end{table}

\paragraph{Predictor and latent-stability diagnostics.}
Additional configuration diagnostics provide empirical support for the predictor and latent-stability choices used in the final model.
Replacing bidirectional prediction with causal attention substantially slowed predictive MSE convergence and yielded 72.95\% Simple accuracy.
Reducing the JEPA latent dimension from 1024 to 256 similarly slowed convergence and yielded 74.28\% Simple accuracy, suggesting that the JEPA latent should not be aggressively compressed within this decoder architecture.
Removing stop-gradient caused rapid latent collapse under the default SIGReg setting.
Stronger regularization preserved non-degenerate latent dispersion, but accuracy degraded rapidly.
At the default weight, pooling image and SMILES latents under one SIGReg term rapidly drove the pooled statistic toward the regularization target, and accuracy degraded substantially.
With a substantially lower pooled-SIGReg weight, predictive MSE decreased slowly, but Simple accuracy remained limited to 73.24\%.
Together, these diagnostics support bidirectional recovery, clean-target detachment, and modality-wise regulation for learning predictable intermediate states without forcing the heterogeneous modalities into one pooled statistic.

\section{Discussion and Limitations}
\label{sec:discussion}
\method{} complements rather than replaces SFT: the causal objective anchors the output language and deployable image-to-SMILES path, while latent prediction supplies an additional representation-level signal from gold or pseudo-labeled pairs.

Synthetic data serve useful but distinct roles across the curriculum.
Both Base SFT and AC-JEPA benefit from full MolParser-7M pretraining before Real-SFT adaptation; the gold-pair gain therefore does not come from access to new paired examples.
Our mixed-SFT pilot indicates that returning synthetic pairs to categorical supervision is not automatically beneficial after real-image adaptation and can increase overfitting.
Together with the gold-pair results, this observation supports AC-JEPA as a way to extract further predictive constraints from existing pairs while retaining a real-data token anchor.
The pilot remains qualitative, and its scope should be distinguished from a systematic comparison across training budgets and synthetic/real sampling ratios.

Its noise tolerance is conditional.
Continuous latent targets can soften the effect of local pseudo-label errors, but systematic bias may still shift the learned representation; our controlled-corruption results support this behavior only within the tested OCSR setting.
Broader visual-to-symbolic generalization therefore requires validation on additional tasks, and the extra pseudo-label generation and training should not be interpreted as compute-free.

\section{Conclusion}
\label{sec:conclusion}
We introduced \method{}, an SFT-compatible objective that learns visual-to-symbolic representations through bidirectional masked latent prediction while retaining causal token supervision.
On OCSR, gold-pair \method{} improves a Base SFT model that already shares complete synthetic MolParser-7M pretraining and Real-SFT adaptation, showing the usefulness of reusing existing pairs through latent prediction.
The contrasting small-scale mixed-SFT observation supports retaining real-image token supervision while assigning synthetic-pair reuse to the latent objective.
Latent self-training further improves performance with imperfect pseudo-labels, where the recorded direct pseudo-label SFT run degrades.
These results support anchored joint-embedding prediction as a practical route to more noise-tolerant cross-modal translation.

\bibliographystyle{plainnat}
\bibliography{references}

\appendix
\section{Appendix}
\label{app:appendix}
\subsection{Structured Corruption Details}
\label{app:structured-corruption}

The main text defines the image and SMILES mask sets used by the latent MSE.
For completeness, contiguous spans are sampled only within the assistant SMILES region,
\begin{equation}
  \begin{aligned}
  \mathcal{M}_{\mathrm{smi}}
  &=
  \bigcup_{b=1}^{B_{\mathrm{smi}}}
  \{s_b,\ldots,s_b+\ell_b-1\},\\
  \ell_b
  &\sim
  \mathrm{Unif}[\ell_{\min},\ell_{\max}],
  \end{aligned}
  \label{eq:app-smiles-mask}
\end{equation}
until the prescribed SMILES masking ratio is reached.
Rectangular blocks are sampled on the visual patch grid,
\begin{equation}
  \mathcal{M}_{\mathrm{img}}
  =
  \bigcup_{b=1}^{B_{\mathrm{img}}}
  \left\{(u,v):
  \begin{aligned}
  u_b&\leq u<u_b+h_b,\\[-2pt]
  v_b&\leq v<v_b+w_b
  \end{aligned}
  \right\}.
  \label{eq:app-image-mask}
\end{equation}
These masks can remove all or part of local symbolic motifs and coherent image regions without asserting that every sampled mask coincides with a complete chemical substructure.
Prompt, padding, and special tokens are excluded from both sets.
The masks are sampled before the clean and corrupted forward passes so that both branches use the same prediction positions.
If a modality has no masked positions in a micro-batch, its MSE term is skipped through a graph-valid zero.

\subsection{Target Stabilization and Gradient Routing}
\label{app:target-stabilization}

The clean branch produces the undetached latent $Z_{\mathrm{clean}}$, from which the predictive target is formed as $Z_{\mathrm{tgt}}=\operatorname{sg}(Z_{\mathrm{clean}})$.
Consequently, $\mathcal{L}_{\mathrm{MSE}}$ updates the corrupted-view backbone up to layer $\ell$, the shared projector, and the predictor, but does not move the clean target through the MSE path.
The stop-gradient does not freeze the shared backbone.
The gold-pair SFT loss still updates the complete visual encoder and causal decoder, and modality-wise SIGReg is evaluated on the undetached $Z_{\mathrm{clean}}$.
Thus, stop-gradient stabilizes the clean SFT-anchored target, whereas SIGReg supplies collapse control.

The no-detach variant allows the MSE to update both clean and corrupted representations and is retained as a target-stability diagnostic.
It is not treated as the default merely because the underlying LeJEPA regularizer can be used without a detached target.
This distinction is specific to the present shared-weight autoregressive generator, for which a latent space that is easy to predict need not remain useful for exact SMILES decoding.

\subsection{Modality-wise SIGReg Details}
\label{app:sigreg-details}

SIGReg compares one-dimensional random projections of the clean latent distribution with the characteristic function of a standard normal target~\citep{balestriero2025lejepa}.
For modality $m$, let $\mathcal{Z}_m$ contain all eligible clean-view latents in the JEPA mini-batch and let $N_m=|\mathcal{Z}_m|$.
With random unit directions
\begin{equation}
  \widetilde{a}_p^m\sim\mathcal{N}(0,I),
  \qquad
  a_p^m=\widetilde{a}_p^m/\|\widetilde{a}_p^m\|_2,
\end{equation}
and frequency knots $t_k$, the modality-specific penalty is
\begin{equation}
  \begin{aligned}
  \mathcal{L}_{\mathrm{SIGReg}}^{m}
  &=
  \frac{1}{P}
  \sum_{p=1}^{P}
  N_m\sum_{k=1}^{K}w_k\\[-2pt]
  &\quad\times
  \left|
    \frac{1}{N_m}\sum_{z\in\mathcal{Z}_m}
    e^{\mathrm{i}t_k(a_p^m)^\top z}\right.\\[-2pt]
  &\left.\qquad
    -e^{-t_k^2/2}
  \right|^2 .
  \end{aligned}
  \label{eq:app-sigreg-modality}
\end{equation}
Image and SMILES positions are pooled across examples within their respective modality, but the two modalities are never pooled into one SIGReg statistic.
Because the statistic is computed from the clean branch, active mask tokens do not enter $\mathcal{Z}_m$.
The main-text regularizer in \Cref{eq:sigreg-total} is the weighted sum of these two penalties.

\subsection{Training-Stream Details}
\label{app:training-stream-details}

The unlabeled pool used for R1 and R2 includes molecular images from USPTO-Mol
(\url{https://huggingface.co/datasets/xiegeo/uspto-mol}) and structure figures
extracted from scientific papers.
The coefficients $\lambda_{\mathrm{sft}}$, $\lambda_{\mathrm{mse}}^m$, and $\lambda_{\mathrm{sig}}^m$ are fixed within a training run.
SFT and JEPA are estimated from separate mini-batches drawn from $\mathcal{D}_{\mathrm{sft}}$ and $\mathcal{D}_{\mathrm{jepa}}^{g}$, respectively, and may use different batch sizes; this changes the stochastic estimator but not the objectives in \Cref{eq:anchor-objective,eq:latent-self-training-objective}.
For the simplest accepted pseudo-label pool, $q(I,\hat S)=1$.
Algorithm~\ref{alg:ac-jepa-training} makes the reported refresh protocol explicit.
The common SFT-anchored, JEPA-warmed checkpoint $C_0$ generates the first pseudo-pair pool and initializes the round-1 student.
The trained round-1 checkpoint generates the second pool, but the round-2 student is reinitialized from $C_0$ rather than continued from the round-1 weights.

\paragraph{Synthetic-pair reuse pilot.}
\label{app:synthetic-reuse-pilot}
Our small-scale mixed-SFT implementation sampled synthetic pairs from MolParser-7M together with Real-SFT for categorical token supervision.
Relative to Real-SFT-only adaptation, it provided no improvement and was more prone to overfitting.
This is a qualitative observation motivating the choice to retain real-image token supervision and reuse synthetic pairs through latent prediction.
It is separate from the controlled label-corruption experiment, which uses MolGrapher-Synthetic-300K.
We do not assign a numerical endpoint or claim a complete 7M-example rerun or a matched-compute comparison for this pilot.

\begin{algorithm}[tb]
\caption{\method{} training and pseudo-label refresh}
\label{alg:ac-jepa-training}
\small
\raggedright
\textbf{Input}: common checkpoint $C_0$; reliable SFT pairs
$\mathcal D_{\mathrm{sft}}$; gold JEPA pairs $\mathcal D_{\mathrm{jepa}}^g$;
unlabeled images $\mathcal D_u$\\
\textbf{Output}: independently trained checkpoints $C_g,C_1,C_2$\\
\textbf{Notation}: $\ell_{\mathrm{MSE}}^m$ and
$\ell_{\mathrm{SIGReg}}^m$ are unweighted modality-specific terms
\begin{algorithmic}[1]
\STATE \textbf{procedure} $\textsc{Train}(C_0,\mathcal D_p,\eta,T)$
\STATE Restore $C_0$ and initialize a run-specific optimizer
\FOR{$t=1,\ldots,T$}
  \STATE Sample $B_s\!\sim\!\mathcal D_{\mathrm{sft}}$ and
  $B_g\!\sim\!\mathcal D_{\mathrm{jepa}}^g$
  \STATE $\mathcal L\leftarrow
  \lambda_{\mathrm{sft}}\mathcal L_{\mathrm{SFT}}(B_s)$
  \STATE $\mathcal Q\leftarrow\{(B_g,1)\}$
  \IF{$\mathcal D_p\neq\emptyset$}
    \STATE Sample $B_p\sim\mathcal D_p$ and add
    $(B_p,\eta q(B_p))$ to $\mathcal Q$
  \ENDIF
  \FOR{each $(B,w)\in\mathcal Q$}
    \STATE Construct clean and structured-masked views of $B$
    \STATE $Z_c\leftarrow g_\phi(H_\theta^\ell(\mathbf V_{\mathrm{clean}}))$
    \STATE $Z_t\leftarrow\operatorname{sg}(Z_c)$
    \STATE $Z_p\leftarrow
    p_\psi(g_\phi(H_\theta^\ell(\mathbf V_{\mathrm{mask}})))$
    \STATE $\mathcal L_{\mathrm{pred}}\leftarrow
    \sum_m\lambda_{\mathrm{mse}}^m
    \ell_{\mathrm{MSE}}^m(Z_p,Z_t)$
    \STATE $\mathcal L_{\mathrm{reg}}\leftarrow
    \sum_m\lambda_{\mathrm{sig}}^m
    \ell_{\mathrm{SIGReg}}^m(Z_c)$
    \STATE $\mathcal L\leftarrow\mathcal L+
    w(\mathcal L_{\mathrm{pred}}+\mathcal L_{\mathrm{reg}})$
  \ENDFOR
  \STATE Update $(\theta,\phi,\psi)$ using $\nabla\mathcal L$
\ENDFOR
\STATE \textbf{return} the trained checkpoint
\STATE \textbf{end procedure}
\STATE $C_g\leftarrow\textsc{Train}(C_0,\emptyset,0,T_g)$
\STATE $\mathcal D_p^{(1)}\leftarrow
\{(I,\operatorname{Decode}(C_0,I)):I\in\mathcal D_u\}$
\STATE $C_1\leftarrow
\textsc{Train}(C_0,\mathcal D_p^{(1)},\eta_1,T_1)$
\STATE $\mathcal D_p^{(2)}\leftarrow
\{(I,\operatorname{Decode}(C_1,I)):I\in\mathcal D_u\}$
\STATE $C_2\leftarrow
\textsc{Train}(C_0,\mathcal D_p^{(2)},\eta_2,T_2)$
\STATE \textbf{return} $C_g,C_1,C_2$
\end{algorithmic}
\end{algorithm}

\subsection{Supporting Analysis and Proofs for Task Readout Stability and Pseudo-Label Noise}
\label{app:conditional-analysis-proofs}

This appendix proves the conditional-readout and parameter-influence results in the main text and makes their assumptions explicit.
The first analysis holds a jointly trained model fixed, whereas the second is local to a counterfactual clean-pair solution anchored by the gold SFT and JEPA objectives.
It does not assert global robustness or compare loss functions independently of representation geometry.

\subsubsection{From Symbolic JEPA Loss to Conditional Residual}
\label{app:predictive-risk-link}

Sample $i$ from the masked SMILES positions under the training data and masking distribution.
Write $Z_i=Z_{\mathrm{tgt},i}$, $\widehat Z_i=Z_{\mathrm{pred},i}$, and let $C_i$ contain the corrupted multimodal latent sequence and mask-position information available to the predictor.
The conditional mean $m_i(C_i)=\mathbb E[Z_i\mid C_i]$ gives the exact orthogonal decomposition of the unweighted symbolic MSE component,
\begin{equation}
  \begin{aligned}
  \mathcal L_{\mathrm{MSE}}^{\mathrm{smi}}
  &=
  \mathbb E\|Z_i-\widehat Z_i\|_2^2\\
  &=
  \underbrace{\mathbb E\|Z_i-m_i(C_i)\|_2^2}
             _{\overline R_{\mathrm J}^{\mathrm{smi}}}
  +
  \underbrace{\mathbb E\|m_i(C_i)-\widehat Z_i\|_2^2}
             _{\mathcal E_{\mathrm{pred}}^{\mathrm{smi}}}.
  \end{aligned}
  \label{eq:app-predictive-risk-decomposition}
\end{equation}
The cross term vanishes because
$\mathbb E[Z_i-m_i(C_i)\mid C_i]=0$.
Thus $\overline R_{\mathrm J}^{\mathrm{smi}}$ is the Bayes residual of symbolic latent prediction and $\mathcal E_{\mathrm{pred}}^{\mathrm{smi}}$ measures predictor approximation and optimization error.
Stop-gradient changes the update path through $Z_i$ but not this forward-risk identity.

Consider two checkpoints evaluated under the same data and masking distribution.
Let
$\Delta_{\mathrm J}
=
\mathcal L_{\mathrm{MSE},0}^{\mathrm{smi}}
-
\mathcal L_{\mathrm{MSE},1}^{\mathrm{smi}}$,
and assume that predictor warm-up gives
$\mathcal E_{\mathrm{pred},0}^{\mathrm{smi}}\leq\epsilon_{\mathrm p}$.
Since predictor excess risk is nonnegative, \Cref{eq:app-predictive-risk-decomposition} yields
\begin{equation}
  \overline R_{\mathrm J,0}^{\mathrm{smi}}
  -
  \overline R_{\mathrm J,1}^{\mathrm{smi}}
  =
  \Delta_{\mathrm J}
  -
  \mathcal E_{\mathrm{pred},0}^{\mathrm{smi}}
  +
  \mathcal E_{\mathrm{pred},1}^{\mathrm{smi}}
  \geq
  \Delta_{\mathrm J}-\epsilon_{\mathrm p}.
  \label{eq:app-residual-reduction}
\end{equation}
Hence a symbolic-MSE decrease larger than the initial predictor excess reduces the Bayes latent residual.
The statement concerns the population objective at the two checkpoints; it does not require every stochastic update to decrease the residual.

To connect this average residual to \Cref{thm:task-anchored-readout}, define
$m_{i,s^+}(C_i)=\mathbb E[Z_i\mid C_i,S_{i+1}=s^+]$.
The law of total variance gives
\begin{equation}
  \begin{aligned}
  \overline R_{\mathrm J}^{\mathrm{smi}}
  &=
  \mathbb E\|Z_i-m_{i,S_{i+1}}(C_i)\|_2^2
  \\
  &\quad+
  \mathbb E\|m_{i,S_{i+1}}(C_i)-m_i(C_i)\|_2^2 .
  \end{aligned}
  \label{eq:app-class-residual-decomposition}
\end{equation}
Under the locally affine latent interface
$Z_i=J_iH_i+a_i$, the first term equals
$\mathbb E[R_{\mathrm J,i}(C_i)]$; for a nonlinear interface this equality holds to first order with the corresponding Taylor remainder.
Suppose the overlap constant $\kappa$ is uniform over the analyzed strata and the SFT anchor maintains
$\mu_{i,s^-}(C_i)\geq\mu_0>0$ for a fixed competitor assignment.
Applying \Cref{thm:task-anchored-readout}, averaging over contexts and next-token classes, and using concavity of $x/(x+\mu_0^2)$ gives
\begin{equation}
  \Pr[\Delta_{i,s^-}(H_i)\leq0]
  \leq
  \frac{\kappa\,\overline R_{\mathrm J}^{\mathrm{smi}}}
       {\kappa\,\overline R_{\mathrm J}^{\mathrm{smi}}+\mu_0^2}.
  \label{eq:app-average-readout-bound}
\end{equation}
Together, \Cref{eq:app-residual-reduction,eq:app-average-readout-bound} connect a decrease in the actual symbolic JEPA objective to a tighter averaged readout bound under predictor tracking, local decoder--JEPA overlap, and SFT margin preservation.
Modality-wise SIGReg excludes the trivial collapsed route to a small latent residual but does not replace these conditions.
Under these conditions, the SFT and JEPA objectives play complementary roles.
SFT maintains the class-conditional mean margin, while a reduction in symbolic JEPA risk reduces context-unpredictable latent variation.
Decoder--JEPA overlap converts this reduction into a tighter bound on the probability that hidden-state fluctuations cross the decision boundary.

\paragraph{Image-to-symbol transfer.}
The preceding argument uses symbolic MSE because the corresponding latent state directly supports the causal decoder, but image-side prediction can also reduce symbolic decision variation through cross-modal fusion.
Let $V$ and $S$ denote image- and symbol-side intermediate states and locally write the fused symbolic state as $H_i=F_i(V,S)$.
Around the conditional mean, its centered residual admits the first-order decomposition
\begin{equation}
  U_i
  =
  A_iU_{\mathrm{img}}
  +
  B_iU_{\mathrm{smi}}
  +
  r_i,
  \label{eq:app-cross-modal-residual}
\end{equation}
where $A_i$ is the image-to-symbol Jacobian induced by cross-modal interaction, $B_i$ is the symbol-side Jacobian, and $r_i$ is the linearization remainder.
For modality-specific latent interfaces $J_{\mathrm{img}}$ and $J_{\mathrm{smi}}$, define
$R_{\mathrm J}^{m}=\mathbb E[\|J_mU_m\|_2^2\mid C_i,S_{i+1}=s^+]$.
If
$A_i^\top d_{i,s^-}d_{i,s^-}^\top A_i\preceq
\kappa_{\mathrm{img}}J_{\mathrm{img}}^\top J_{\mathrm{img}}$
and the analogous symbol-side overlap holds with $\kappa_{\mathrm{smi}}$, then
\begin{equation}
  v_{i,s^-}(C_i)
  \leq
  \kappa_{\mathrm{img}}R_{\mathrm J}^{\mathrm{img}}
  +
  \kappa_{\mathrm{smi}}R_{\mathrm J}^{\mathrm{smi}}
  +
  \epsilon_{\mathrm{cross}}
  +
  \epsilon_{\mathrm{lin}},
  \label{eq:app-cross-modal-variance}
\end{equation}
where $\epsilon_{\mathrm{cross}}$ bounds cross-modal covariance terms and
$\epsilon_{\mathrm{lin}}$ bounds the contribution of $r_i$.
Applying the decomposition in \Cref{eq:app-predictive-risk-decomposition} to masked image positions gives the analogous image-side loss--residual link; connecting its population average to $R_{\mathrm J}^{\mathrm{img}}$ additionally requires the paired sampling and local transfer assumptions above.
Thus image-side stabilization can tighten the decision-reversal bound in \Cref{thm:task-anchored-readout} when cross-modal transfer reaches the decoder-relevant subspace, the remaining terms are controlled, and SFT preserves the margin.
With locally fixed attention scores, $A_i$ reduces to attention-weighted value transfer; score variation is absorbed into the remainder.
The fact that both modalities describe the same entity makes the required alignment plausible but does not guarantee it: no improvement follows if the decoder ignores the image subspace or if correlated or systematic errors dominate.
Because $C_i$ and $H_i$ in \Cref{thm:task-anchored-readout} are already multimodal, this decomposition makes an implicit pathway explicit rather than introducing a separate guarantee.

\begin{proof}[Proof of \Cref{thm:task-anchored-readout}]
Fix $i\in\mathcal{M}_{\mathrm{smi}}$ and condition on $(C_i,S_{i+1}=s^+)$.
The state $H_i$ is aligned with $S_i$, whereas the causal readout at the same decoder position predicts $S_{i+1}$.
With the other causal states fixed, write
$U_i=H_i-\mathbb E[H_i\mid C_i,S_{i+1}=s^+]$.
Under the locally affine readout,
\begin{equation}
  \begin{aligned}
  \Delta_{i,s^-}(H_i)-\mu_{i,s^-}(C_i)
  &=d_{i,s^-}^{\top}U_i,\\
  v_{i,s^-}(C_i)
  &=
  \mathbb E\!\left[
  (d_{i,s^-}^{\top}U_i)^2
  \,\middle|\,
  \substack{C_i,\\S_{i+1}=s^+}
  \right].
  \end{aligned}
\end{equation}
Cantelli's inequality gives
\begin{equation}
  \Pr\!\left[
  \Delta_{i,s^-}(H_i)\leq0
  \,\middle|\,
  \substack{C_i,\\S_{i+1}=s^+}
  \right]
  \leq
  \frac{v_{i,s^-}(C_i)}
       {v_{i,s^-}(C_i)+\mu_{i,s^-}(C_i)^2}.
\end{equation}
Moreover, the decoder--JEPA overlap condition
$d_{i,s^-}d_{i,s^-}^{\top}\preceq\kappa J_i^\top J_i$ implies
\begin{align}
  v_{i,s^-}(C_i)
  &=
  \mathbb E[
    U_i^\top d_{i,s^-}d_{i,s^-}^{\top}U_i
    \mid C_i,S_{i+1}=s^+]\\
  &\leq
  \kappa\mathbb E[\|J_iU_i\|_2^2\mid C_i,S_{i+1}=s^+].
\end{align}
The function $v/(v+\mu^2)$ is increasing in $v$ and decreasing in positive $\mu$, which proves the final comparison.
For a nonlinear JEPA interface, $J_iU_i$ is the first-order projected residual; the theorem therefore applies only where the local linearization and overlap condition are valid.
\end{proof}

\subsubsection{Pseudo-label Residuals and Parameter Influence}
\label{app:pseudolabel-output-influence}

The second result compares two training problems that use the same images and differ only in the symbolic sequence attached to each pseudo-pair.
Let
\begin{equation}
  \begin{aligned}
  \mathcal L_{\mathrm{clean}}(\theta)
  &=
  \mathcal L_{\mathrm{anchor}}(\theta)
  +
  \frac{\lambda}{n}\sum_{j=1}^{n}
  \ell_j(\theta;S_j),\\
  \mathcal L_{\mathrm{pseudo}}(\theta)
  &=
  \mathcal L_{\mathrm{anchor}}(\theta)
  +
  \frac{\lambda}{n}\sum_{j=1}^{n}
  \ell_j(\theta;\widehat S_j),\\
  \ell_j(\theta;S)
  &=
  \frac12\|r_j(\theta;S)\|_2^2 .
  \end{aligned}
  \label{eq:app-clean-pseudo-objectives}
\end{equation}
$\mathcal L_{\mathrm{anchor}}$ contains the gold-pair SFT and JEPA streams, and $\theta^\star$ is a local stationary point of $\mathcal L_{\mathrm{clean}}$.
The unobserved sequences $S_j$ define a counterfactual reference and are not required by the training algorithm.
Using the same images in both objectives separates pseudo-label corruption from the useful signal that can arise simply from adding open-environment diagrams.
The factor $1/2$ only absorbs the conventional factor of two into $\lambda$.

For a fixed image, mask realization, and predicted latent position, write
\begin{equation}
  \begin{aligned}
  r_j^{\mathrm c}
  &=
  Z_{\mathrm{pred},j}(\theta^\star;S_j)
  -
  Z_{\mathrm{tgt},j}(\theta^\star;S_j),\\
  r_j^{\mathrm p}
  &=
  Z_{\mathrm{pred},j}(\theta^\star;\widehat S_j)
  -
  Z_{\mathrm{tgt},j}(\theta^\star;\widehat S_j),\\
  e_j
  &=
  r_j^{\mathrm p}-r_j^{\mathrm c}
  =
  e_j^{\mathrm{ctx}}-e_j^{\mathrm{tgt}},\\
  e_j^{\mathrm{ctx}}
  &=
  Z_{\mathrm{pred},j}(\theta^\star;\widehat S_j)
  \\
  &\quad-
  Z_{\mathrm{pred},j}(\theta^\star;S_j),\\
  e_j^{\mathrm{tgt}}
  &=
  Z_{\mathrm{tgt},j}(\theta^\star;\widehat S_j)
  \\
  &\quad-
  Z_{\mathrm{tgt},j}(\theta^\star;S_j).
  \end{aligned}
  \label{eq:app-effective-residual-perturbation}
\end{equation}
This decomposition is not a binary assignment of an error to the context or target route.
A changed pseudo-token can affect the visible predictor context in one mask realization and the clean target states of other positions in the same example.
The index $j$ may therefore include sampling over mask realizations and masked positions.

Let
$A_j=\partial r_j(\theta;S_j)/\partial\theta|_{\theta^\star}$ and
$\widehat A_j=\partial r_j(\theta;\widehat S_j)/\partial\theta|_{\theta^\star}$.
These derivatives follow the implemented MSE path, so the detached target contributes to the value of $e_j$ but not to either residual Jacobian.
The exact gradient difference for the squared residual is
\begin{equation}
  \begin{aligned}
  \nabla_\theta\ell_j(\theta^\star;\widehat S_j)
  -
  \nabla_\theta\ell_j(\theta^\star;S_j)
  &=
  \widehat A_j^\top r_j^{\mathrm p}
  -
  A_j^\top r_j^{\mathrm c}\\
  &=
  A_j^\top e_j
  +
  \rho_j,\\
  \rho_j
  &=
  (\widehat A_j-A_j)^\top r_j^{\mathrm p}.
  \end{aligned}
  \label{eq:app-residual-gradient-difference}
\end{equation}
The leading term pulls the explicit latent residual perturbation back to parameter space.
The remainder $\rho_j$ records the change of the predictor Jacobian caused by the pseudo sequence and becomes small when the local shared-Jacobian approximation is accurate or the perturbed residual is small.

Let $K=\nabla_\theta^2\mathcal L_{\mathrm{clean}}(\theta^\star)$ be positive definite on the local parameter subspace under analysis, and set $u=\theta-\theta^\star$.
Neglecting $\rho_j$ and the change in curvature between the two objectives gives
\begin{equation}
  \mathcal L_{\mathrm{pseudo}}(\theta^\star+u)
  -
  \mathcal L_{\mathrm{pseudo}}(\theta^\star)
  \approx
  \frac12u^\top Ku
  +
  \frac{\lambda}{n}
  \sum_{j=1}^{n}e_j^\top A_j u .
  \label{eq:app-local-pseudo-objective}
\end{equation}
Its stationary displacement is
\begin{equation}
  u_{\mathrm{pseudo}}
  =
  -\frac{\lambda}{n}K^{-1}
  \sum_{j=1}^{n}A_j^\top e_j .
  \label{eq:app-parameter-displacement}
\end{equation}
This is a change in the shared model parameters, not an independently optimized hidden state.
Its effect on any clean example is mediated by that example's parameter-to-state Jacobian.

\begin{proof}[Proof of \Cref{thm:anchored-latent-self-training}]
For a clean evaluation context $x$, define
$J_x=\partial H_x/\partial\theta|_{\theta^\star}$ and
$d_{x,yk}=\nabla_{H_x}G_{x,yk}(\theta^\star)$.
The corresponding parameter-space margin gradient is
$q_{x,yk}=J_x^\top d_{x,yk}$.
Combining the first-order margin expansion with
\Cref{eq:app-parameter-displacement} gives
\begin{equation}
  \begin{aligned}
  G_{x,yk}^{\mathrm{lin}}
  &=
  m_{x,yk}
  +
  q_{x,yk}^\top u_{\mathrm{pseudo}}\\
  &=
  m_{x,yk}
  -
  \frac{\lambda}{n}\sum_{j=1}^{n}
  d_{x,yk}^\top J_xK^{-1}A_j^\top e_j\\
  &=
  m_{x,yk}
  -
  \lambda\overline\zeta_{x,yk},
  \qquad
  \overline\zeta_{x,yk}
  =
  \frac1n\sum_{j=1}^{n}\zeta_{x,yk,j}.
  \end{aligned}
  \label{eq:app-linearized-output-margin}
\end{equation}
Under the theorem assumptions,
$\mathbb E[\overline\zeta_{x,yk}]=\beta_{x,yk}$ and
$\operatorname{Var}(\overline\zeta_{x,yk})\leq\sigma_{x,yk}^2/n$.
When $m_{x,yk}>\lambda\beta_{x,yk}$, boundary crossing requires
\begin{equation}
  \overline\zeta_{x,yk}-\beta_{x,yk}
  \geq
  \frac{m_{x,yk}-\lambda\beta_{x,yk}}{\lambda}.
\end{equation}
Cantelli's inequality therefore yields
\begin{equation}
  \Pr[G_{x,yk}^{\mathrm{lin}}\leq0]
  \leq
  \frac{\sigma_{x,yk}^2/n}
       {\sigma_{x,yk}^2/n+
        (m_{x,yk}/\lambda-\beta_{x,yk})^2},
\end{equation}
which is equivalent to \Cref{eq:pseudo-output-stability}.
\end{proof}

For a multi-class next-token decision, applying the union bound over
$k\in\mathcal V_{\mathrm{dec}}\setminus\{y\}$ gives
\begin{equation}
  \Pr[\operatorname*{argmax}_{s}a_{x,s}(\theta)\neq y]
  \leq
  \sum_{k\neq y}
  \Pr[G_{x,yk}(\theta)\leq0].
  \label{eq:app-multiclass-pseudo-bound}
\end{equation}
Each term uses its own clean margin, output-influence bias, and variance; no independence across competitors is required.

\par\medskip\noindent\textbf{Approximation terms and SIGReg.}\par\nobreak
Equation~\eqref{eq:app-residual-gradient-difference} makes the shared-Jacobian approximation explicit.
Retaining its remainder changes \Cref{eq:app-parameter-displacement} by
$-\lambda K^{-1}n^{-1}\sum_j\rho_j$.
The pseudo sequence also changes the clean latents entering modality-wise SIGReg.
If $g_{\mathrm{sig}}$ denotes the difference between the pseudo-sequence and true-sequence SIGReg gradients, the total additional displacement is
\begin{equation}
  u_{\mathrm{rem}}
  =
  -K^{-1}
  \left(
  \frac{\lambda}{n}\sum_{j=1}^{n}\rho_j
  +
  g_{\mathrm{sig}}
  \right).
  \label{eq:app-pseudo-remainder-displacement}
\end{equation}
We keep these terms separate from $e_j$ because they do not describe a latent residual perturbation.
They can be incorporated into \Cref{thm:anchored-latent-self-training} by adding
$q_{x,yk}^\top u_{\mathrm{rem}}$ to the margin and controlling its bias and variance.
SIGReg can contribute stabilizing curvature to $K$, but its pseudo-label-dependent gradient is not assumed to be zero.

The nonlinear parameter-to-margin map contributes a Taylor remainder.
If the combined scalar remainder in
$G_{x,yk}=m_{x,yk}-\lambda\overline\zeta_{x,yk}+\mathcal R_{x,yk}$
satisfies $|\mathcal R_{x,yk}|\leq\tau_{x,yk}$, the same argument gives
\begin{equation}
  \Pr[G_{x,yk}\leq0]
  \leq
  \frac{\lambda^2\sigma_{x,yk}^2/n}
       {\lambda^2\sigma_{x,yk}^2/n+
        (m_{x,yk}-\tau_{x,yk}
         -\lambda\beta_{x,yk})^2}
  \label{eq:app-output-bound-with-remainder}
\end{equation}
whenever $m_{x,yk}>\tau_{x,yk}+\lambda\beta_{x,yk}$.

\paragraph{Anchor curvature.}
The parameter-space curvature makes the roles of SFT and JEPA explicit.
Let $W_i=\partial a_i/\partial\theta$ be the Jacobian of the clean SFT logits $a_i$ and let $p_i$ be their token distribution.
The cross-entropy Gauss--Newton term is
\begin{equation}
  K_{\mathrm{SFT},i}
  =
  W_i^\top
  [\operatorname{Diag}(p_i)-p_ip_i^\top]
  W_i
  \succeq0.
  \label{eq:app-sft-curvature}
\end{equation}
If $B_j$ is the parameter Jacobian of a clean JEPA residual, its squared loss contributes $B_j^\top B_j$.
A damped local Gauss--Newton model on an identifiable parameter subspace $\mathcal U$ is
\begin{equation}
  \begin{aligned}
  K
  &\approx
  \alpha_{\mathrm S}\sum_i K_{\mathrm{SFT},i}
  +
  \alpha_{\mathrm J}\sum_j B_j^\top B_j
  \\
  &\quad+
  K_{\mathrm{reg}}
  +
  \mu P_{\mathcal U},
  \\
  K_{\mathrm{reg}}
  &\succeq0,\qquad\mu>0.
  \end{aligned}
  \label{eq:app-anchor-curvature}
\end{equation}
where $K_{\mathrm{reg}}$ is a positive-semidefinite local approximation to regularization curvature and $P_{\mathcal U}$ projects onto $\mathcal U$.
The restriction is necessary because neural-network parameterizations contain flat and symmetry-related directions.
The theorem depends on curvature only along the pulled-back pseudo-error and evaluation-margin directions, rather than on global positive definiteness of the full network Hessian.
With mean-reduced objectives, more clean samples do not automatically scale $K$; they can nevertheless improve curvature coverage and reduce its estimation variance.

\paragraph{Bias, variance, and dependence.}
The harmful bias is
\begin{equation}
  \beta_{x,yk}
  =
  \mathbb E[
  d_{x,yk}^\top J_xK^{-1}A_j^\top e_j],
  \label{eq:app-output-influence-bias}
\end{equation}
which depends jointly on the latent residual perturbation, the training Jacobian, the anchored curvature, and the evaluation readout.
Consequently, $\mathbb E[e_j]=0$ is not sufficient when $A_j$ and $e_j$ are dependent.
The stronger condition $\mathbb E[e_j\mid A_j]=0$ implies $\beta_{x,yk}=0$ for fixed $x$, $K$, and $\theta^\star$, but model-generated pseudo-labels need not satisfy it.
The bias--variance statement in \Cref{thm:anchored-latent-self-training} therefore retains $\beta_{x,yk}$ instead of assuming exact centering.
If pseudo constraints are correlated, the variance of their average becomes
$n^{-2}\sum_{j,j'}\operatorname{Cov}(\zeta_j,\zeta_{j'})$ and need not decay as $1/n$.

\paragraph{Single-error comparison with direct pseudo-SFT.}
For one erroneous pseudo-token, let $\widetilde y$ be the erroneous alternative,
$G_{y\widetilde y}(\theta^\star)=m>0$, and
$q=\nabla_\theta G_{y\widetilde y}(\theta^\star)$.
One gradient step of size $\rho$ on the pseudo-JEPA perturbation relative to its clean-pair counterpart gives
\begin{equation}
  \theta_{\mathrm J}'
  =
  \theta^\star-\rho\lambda A^\top e,
  \qquad
  m_{\mathrm J}'
  \approx
  m-\rho\lambda q^\top A^\top e.
  \label{eq:app-jepa-single-error}
\end{equation}
In a binary local readout, let $p_y=\operatorname{sigmoid}(m)$.
Cross-entropy toward $\widetilde y$ has gradient $p_yq$, so direct pseudo-SFT gives
\begin{equation}
  \theta_{\mathrm S}'
  =
  \theta^\star-\rho\lambda p_yq,
  \qquad
  m_{\mathrm S}'
  \approx
  m-\rho\lambda p_y\|q\|_2^2.
  \label{eq:app-sft-single-error}
\end{equation}
Thus the wrong categorical target decreases the correct-versus-wrong margin in this local model.
The pseudo-JEPA effect instead depends on the signed alignment
$q^\top A^\top e$ and can either decrease or increase that margin.
For the corresponding binary probability,
\begin{equation}
  |p_y^{\mathrm J}-p_y|
  \leq
  \frac{\rho\lambda}{4}
  |q^\top A^\top e|
  \leq
  \frac{\rho\lambda}{4}
  \|Aq\|_2\|e\|_2.
  \label{eq:app-jepa-probability-bound}
\end{equation}

With the clean anchor retained, the local pseudo-JEPA equilibrium is
$u_{\mathrm J}=-\lambda K^{-1}A^\top e$.
A quadratic expansion of the erroneous pseudo-SFT term gives
\begin{equation}
  u_{\mathrm S}
  \approx
  -\lambda p_y
  [K+\lambda p_y(1-p_y)qq^\top]^{-1}q.
  \label{eq:app-sft-equilibrium-shift}
\end{equation}
The bracketed matrix is positive definite on $\mathcal U$, and
$q^\top u_{\mathrm S}<0$.
This comparison does not order the magnitudes of all JEPA and SFT errors.
It identifies their different routes to the output boundary: pseudo-SFT directly optimizes an erroneous categorical preference, whereas pseudo-JEPA first induces a continuous residual perturbation whose effect is filtered by $A$, $K^{-1}$, and the clean readout direction.

\subsubsection{Multimodal Evidence and Local Readout Stability}
\label{app:cross-modal-evidence-corollary}

\Cref{thm:task-anchored-readout} relates decoder-relevant latent variation to next-token stability, but it does not quantify how evidence from multiple modalities changes that variation.
We consider $M$ co-observed modalities that describe the same underlying entity or state.
For modality $m$, let $\mathcal G_m$ contain its reliable, target-relevant entries, and let $\mathcal N$ contain the visible noisy entries across the multimodal context.
Under a locally fixed attention map with score $\alpha_k$ and temperature $\tau>0$, define
\begin{equation}
  \begin{aligned}
  Z_m
  &=
  \sum_{g\in\mathcal G_m}e^{\alpha_g/\tau},
  &
  Z_{\mathcal N}
  &=
  \sum_{j\in\mathcal N}e^{\alpha_j/\tau},\\
  Z_{\mathcal G}^{(M)}
  &=
  \sum_{m=1}^{M}Z_m,
  &
  \Delta_M
  &=
  \log\frac{Z_{\mathcal G}^{(M)}}{Z_{\mathcal N}}.
  \end{aligned}
  \label{eq:app-multimodal-partitions}
\end{equation}
Let $a_k\propto e^{\alpha_k/\tau}$ be the corresponding attention weights.
The total noisy attention mass and its effective support are
\begin{equation}
  \beta_M
  =
  \sum_{j\in\mathcal N}a_j
  =
  \frac{1}{1+e^{\Delta_M}},
  \qquad
  n_{\mathrm{eff}}
  =
  \frac{\beta_M^2}{\sum_{j\in\mathcal N}a_j^2}.
  \label{eq:app-multimodal-noisy-mass}
\end{equation}

Suppose only the noisy values are perturbed,
$\widetilde v_j=v_j+\xi_j$ for $j\in\mathcal N$, where the $\xi_j$ are conditionally independent, centered, and satisfy
$\operatorname{Cov}(\xi_j\mid C_i,s^+)\preceq\sigma_v^2I$.
Let their attention-output perturbation be
$\widetilde o_M-o_M^\star=\sum_{j\in\mathcal N}a_j\xi_j$.
We locally decompose the correct-versus-competitor margin as
\begin{equation}
  \widetilde\Delta_{i,s^-}
  =
  \mu_{i,s^-}
  +
  \varepsilon_0
  +
  w^\top(\widetilde o_M-o_M^\star),
  \label{eq:app-multimodal-margin-decomposition}
\end{equation}
where $\mu_{i,s^-}>0$, $\varepsilon_0$ is conditionally centered and uncorrelated with the $\xi_j$, and
$\operatorname{Var}(\varepsilon_0\mid C_i,s^+)\leq v_0$.

\begin{corollary}[Multimodal evidence tightens the local readout bound]
\label{cor:cross-modal-evidence-stability}
Under the conditions above, define
\begin{equation}
  B_M^{\mathrm{vis}}
  =
  \frac{\|w\|_2^2\sigma_v^2}
       {n_{\mathrm{eff}}(1+e^{\Delta_M})^2}.
  \label{eq:app-multimodal-visible-bound}
\end{equation}
Then
\begin{equation}
  \Pr[
    \widetilde\Delta_{i,s^-}\leq0
    \mid C_i,s^+
  ]
  \leq
  \frac{v_0+B_M^{\mathrm{vis}}}
       {v_0+B_M^{\mathrm{vis}}+\mu_{i,s^-}^2}.
  \label{eq:app-multimodal-readout-bound}
\end{equation}
If an additional modality contributes reliable evidence mass $Z_{M+1}>0$ while the existing scores, noisy entries, and local readout map remain fixed, then
\begin{equation}
  \frac{B_{M+1}^{\mathrm{vis}}}{B_M^{\mathrm{vis}}}
  =
  \left[
  \frac{1+e^{\Delta_M}}
       {1+e^{\Delta_M}
       (1+Z_{M+1}/Z_{\mathcal G}^{(M)})}
  \right]^2
  <1.
  \label{eq:app-multimodal-contraction}
\end{equation}
If every modality supplies $Z_m\geq Z_{\min}>0$, then
\begin{equation}
  B_M^{\mathrm{vis}}
  \leq
  \frac{\|w\|_2^2\sigma_v^2}{n_{\mathrm{eff}}}
  \left(
  1+\frac{MZ_{\min}}{Z_{\mathcal N}}
  \right)^{-2}.
  \label{eq:app-multimodal-count-bound}
\end{equation}
Thus the local decision-reversal bound tightens monotonically with $M$ when added modalities preserve the mean margin, do not increase $v_0$, and contribute reliable evidence under the fixed local map.
\end{corollary}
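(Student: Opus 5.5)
The plan is to reduce the corollary to the Cantelli step already used in the proof of \Cref{thm:task-anchored-readout}. First we bound the conditional variance of the margin in \eqref{eq:app-multimodal-margin-decomposition} by $v_0+B_M^{\mathrm{vis}}$. Then we read off the contraction statements as pure algebra in $\Delta_M$.

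\textbf{Step 1: variance bound.} Conditionally on $(C_i,s^+)$, the margin has mean $\mu_{i,s^-}$, because $\varepsilon_0$ and every $\xi_j$ are centered. Since $\varepsilon_0$ is uncorrelated with the $\xi_j$, its variance adds to the variance of $w^\top\sum_{j\in\mathcal N}a_j\xi_j$. By conditional independence of the $\xi_j$ and $\operatorname{Cov}(\xi_j)\preceq\sigma_v^2I$, the second variance equals $\sum_j a_j^2\,w^\top\operatorname{Cov}(\xi_j)w\le\|w\|_2^2\sigma_v^2\sum_{j\in\mathcal N}a_j^2$. Substituting the definition of $n_{\mathrm{eff}}$ gives $\sum_j a_j^2=\beta_M^2/n_{\mathrm{eff}}$. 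By \eqref{eq:app-multimodal-noisy-mass}, $\beta_M=(1+e^{\Delta_M})^{-1}$, so $\beta_M^2/n_{\mathrm{eff}}=1/[n_{\mathrm{eff}}(1+e^{\Delta_M})^2]$. Hence the variance is at most $v_0+B_M^{\mathrm{vis}}$. Cantelli's inequality, together with monotonicity of $v/(v+\mu^2)$ in $v$, then yields \eqref{eq:app-multimodal-readout-bound}.

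\textbf{Step 2: contraction.} We hold the noisy scores fixed, so $Z_{\mathcal N}$ is unchanged. We also claim $n_{\mathrm{eff}}$ is unchanged. The reason is that adding reliable mass rescales all noisy weights by a common factor, and $n_{\mathrm{eff}}=(\sum_j e^{\alpha_j/\tau})^2/\sum_j e^{2\alpha_j/\tau}$ is invariant to such rescaling. This invariance is the one point that needs care, and it is where the ``fixed local map'' hypothesis is used. It follows that $B_{M+1}^{\mathrm{vis}}/B_M^{\mathrm{vis}}=[(1+e^{\Delta_M})/(1+e^{\Delta_{M+1}})]^2$. Since $e^{\Delta_{M+1}}=(Z_{\mathcal G}^{(M)}+Z_{M+1})/Z_{\mathcal N}=e^{\Delta_M}(1+Z_{M+1}/Z_{\mathcal G}^{(M)})$, we obtain \eqref{eq:app-multimodal-contraction}. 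The ratio is strictly below $1$ because $Z_{M+1}>0$.

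\textbf{Step 3: count bound and conclusion.} For the count bound \eqref{eq:app-multimodal-count-bound}, we use $Z_{\mathcal G}^{(M)}\ge MZ_{\min}$. This gives $1+e^{\Delta_M}\ge 1+MZ_{\min}/Z_{\mathcal N}$, and we substitute it into $B_M^{\mathrm{vis}}$. Finally, if $\mu_{i,s^-}$ and $v_0$ are preserved, the right side of \eqref{eq:app-multimodal-readout-bound} is increasing in $B_M^{\mathrm{vis}}$. Since $B_M^{\mathrm{vis}}$ strictly decreases in $M$ by Step 2, the bound tightens monotonically in $M$.
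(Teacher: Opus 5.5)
Your proposal is correct and follows the paper's proof essentially step for step. It uses the same variance bound $\sum_{j}a_j^2\,w^\top\Sigma_j w\le\|w\|_2^2\sigma_v^2\beta_M^2/n_{\mathrm{eff}}$ plus the uncorrelated $v_0$, fed into Cantelli; the same observation that added reliable mass rescales every noisy weight by a common factor, so $n_{\mathrm{eff}}$ is unchanged while $\Delta_{M+1}=\Delta_M+\log(1+Z_{M+1}/Z_{\mathcal G}^{(M)})$; and the same use of $Z_{\mathcal G}^{(M)}\geq MZ_{\min}$. One small correction of attribution: the invariance of $n_{\mathrm{eff}}$ rests on the noisy scores being held fixed, while the fixed readout $w$ and noise level $\sigma_v$ are what let $\|w\|_2^2\sigma_v^2$ cancel in the ratio; your argument already assumes both.
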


The corollary characterizes multimodal redundancy as evidence quality rather than modality count alone.
Images, audio, language, numerical sensor streams, and radar measurements can provide complementary observations of the same underlying state.
When each added modality contributes aligned evidence, their aggregate partition mass increases $\Delta_M$ and reduces the variance transmitted from visible noisy entries.
This is the precise sense in which additional modalities can strengthen mutual verification and improve local noise tolerance.

The monotonic result is conditional.
A redundant, mismatched, systematically biased, or ignored modality need not preserve the mean margin or contribute reliable evidence mass.
The analysis also concerns a local next-token decision under fixed attention scores and centered independent perturbations.
It therefore does not guarantee that adding modalities monotonically improves full-sequence exact-match accuracy.
The image--symbol setting studied by AC-JEPA is the $M=2$ instance of this more general mechanism.

\begin{proof}
Conditional centering makes the attention-output perturbation mean-zero.
Write
$\Sigma_j=\operatorname{Cov}(\xi_j\mid C_i,s^+)$.
For any $w$,
\begin{equation}
  \begin{aligned}
  \mathrm{Var}\!\left[
    w^\top(\widetilde o_M-o_M^\star)
    \mid C_i,s^+
  \right]
  &=
  \sum_{j\in\mathcal N}a_j^2
  w^\top\Sigma_jw\\
  &\leq
  \|w\|_2^2\sigma_v^2
  \sum_{j\in\mathcal N}a_j^2\\
  &=
  \frac{\|w\|_2^2\sigma_v^2\beta_M^2}
       {n_{\mathrm{eff}}}
  =
  B_M^{\mathrm{vis}}.
  \end{aligned}
  \label{eq:app-multimodal-directional-variance}
\end{equation}
Adding the uncorrelated residual variance $v_0$ and applying the first inequality in
\Cref{thm:task-anchored-readout} proves
\Cref{eq:app-multimodal-readout-bound}.

Adding modality $M+1$ replaces
$Z_{\mathcal G}^{(M)}$ by
$Z_{\mathcal G}^{(M)}+Z_{M+1}$ and therefore gives
\begin{equation}
  \Delta_{M+1}
  =
  \Delta_M
  +
  \log\left(
    1+\frac{Z_{M+1}}{Z_{\mathcal G}^{(M)}}
  \right).
\end{equation}
All noisy attention weights are scaled by the same factor, so
$n_{\mathrm{eff}}$ remains unchanged.
Substitution into \Cref{eq:app-multimodal-visible-bound} yields
\Cref{eq:app-multimodal-contraction}.
Finally,
$Z_{\mathcal G}^{(M)}=\sum_m Z_m\geq MZ_{\min}$,
which proves \Cref{eq:app-multimodal-count-bound}.
\end{proof}

\end{document}